\title{Fully Dynamic Online Selection through Online Contention Resolution Schemes}
\author{
	Vashist Avadhanula\thanks{Research performed while the author was working at Meta.}, Andrea Celli\textsuperscript{\rm 1}, Riccardo Colini-Baldeschi\textsuperscript{\rm 2},\\ Stefano Leonardi\textsuperscript{\rm 3}, Matteo Russo\textsuperscript{\rm 3}\\
}
\begin{document}

\maketitle

\begin{abstract}
We study \emph{fully dynamic} online selection problems in an adversarial/stochastic setting that includes Bayesian online selection, prophet inequalities, posted price mechanisms, and stochastic probing problems subject to combinatorial constraints.  
In the classical ``incremental'' version of the problem, selected elements remain active until the end of the input sequence. On the other hand, in the \emph{fully dynamic} version of the problem, elements stay active for a limited time interval, and then leave. This models, for example, the online matching of tasks to workers with task/worker-dependent working times, and sequential posted pricing of perishable goods.
A successful approach to online selection problems in the adversarial setting is given by the notion of \emph{Online Contention Resolution Scheme} (OCRS), that uses \emph{a priori} information to formulate a linear relaxation of the underlying optimization problem, whose optimal fractional solution is rounded online for any adversarial order of the input sequence. Our main contribution is providing a general method for constructing an OCRS for fully dynamic online  selection problems. Then, we show how to employ such OCRS to construct no-regret algorithms in a partial information model with \emph{semi-bandit} feedback and adversarial inputs. 
\end{abstract}

\section{Introduction}\label{sec:intro}

Consider the case where a financial service provider receives multiple operations every hour/day. These operations might be malicious. The provider needs to assign them to \emph{human reviewers} for inspection. The time required by each reviewer to file a reviewing task and the reward (weight) that is obtained with the review follow some distributions. The distributions can be estimated from historical data, as they depend on the type of transaction that needs to be examined and on the expertise of the employed reviewers. To efficiently solve the problem, the platform needs to compute a matching between tasks and reviewers based on the a priori information that is available. However, the time needed for a specific review, and the realized reward (weight), is often known only after the task/reviewer matching is decided. 

A multitude of variations to this setting are possible. For instance, if a cost is associated with each reviewing task, the total cost for the reviewing process might be bounded by a budget. Moreover, there might be various kinds of restrictions on the subset of reviewers that are assigned at each time step. Finally, the objective function might not only be the sum of the rewards (weights) we observe, if, for example, the decision maker has a utility function with ``diminishing return'' property.

To model the general class of sequential decision problems described above, we introduce \emph{fully dynamic online selection problems}. This model generalizes online selection problems \cite{CVZ11}, where elements arrive online in an adversarial order and algorithms can use a priori information to maximize the weight of the selected subset of elements, subject to combinatorial constraints (such as matroid, matching, or knapsack). 

In the classical version of the problem \cite{CVZ11}, once an element is selected, it will affect the combinatorial constraints throughout the entire input sequence.
This is in sharp contrast with the fully dynamic version, where an element will affect the combinatorial constraint only for a limited time interval, which we name \emph{activity time} of the element. For example, a new task can be matched to a reviewer as soon as she is done with previously assigned tasks, or an agent can buy a new good as soon as the previously bought goods are perished. A large class of Bayesian online selection \cite{KleinbergW12}, prophet inequality \cite{HKS07}, posted price mechanism \cite{CHMS10}, and stochastic probing \cite{GN13} problems that have been studied in the classical version of online selection can therefore be extended to the fully dynamic setting. Note that in the dynamic algorithms literature, \emph{fully dynamic} algorithms are algorithms that deal with both adversarial insertions and deletions \citep{DEGI10}. We could also interpret our model in a similar sense since elements arrive online (are inserted) according to an adversarial order, and cease to exist (are deleted) according to adversarially established activity times.

A successful approach to online selection problems is based on Online Contention Resolution Schemes (OCRSs) \cite{FSZ16}. OCRSs use a priori information on the values of the elements to formulate a linear relaxation whose optimal fractional solution upper bounds the performance of the integral offline optimum. Then, an online rounding procedure is used to produce a solution whose value is as close as possible to the fractional relaxation solution's value, for any adversarial order of the input sequence. The OCRS approach allows to obtain good approximations of the expected optimal solution for linear and submodular objective functions. The existence of OCRSs for fully dynamic online selection problems is therefore a natural research question that we address in this work.

The OCRS approach is based on the availability of a priori information on weights and activity times.  However, in real world scenarios, these might be missing or might be expensive to collect. Therefore, in the second part of our work, we study the fully dynamic online selection problem with partial information, where the main research question is whether the OCRS approach is still viable if a priori information on the weights is missing. 
In order to answer this question, we study a repeated version of the fully dynamic online selection problem, in which at each stage weights are unknown to the decision maker (i.e., no a priori information on weights is available) and chosen adversarially.
The goal in this setting is the design of an online algorithm with performances (i.e., cumulative sum of weights of selected elements) close to that of the best static selection strategy in hindsight.

\subsection{Our Contributions}

First, we introduce the \emph{fully dynamic online selection problem}, in which elements arrive following an adversarial ordering, and revealed one-by-one their weights and activity times at the time of arrival (i.e., \emph{prophet model}), or after the element has been selected (i.e., \emph{probing model}). Our model describes temporal packing constraints (i.e., downward-closed), where elements are active only within their activity time interval. The objective is to maximize the weight of the selected set of elements subject to temporal packing constraints.
We provide two black-box reductions for adapting classical OCRS for online (non-dynamic) selection problems to the fully dynamic setting under full and partial information. 
\begin{itemize}[leftmargin=-1pt]
\item[] \textbf{Blackbox reduction 1: from OCRS to temporal OCRS.} Starting from a $(b,c)$-selectable greedy OCRS in the classical setting, we use it as a subroutine to build a $(b,c)$-selectable greedy OCRS in the more general temporal setting (see \Cref{alg:spattemp} and \Cref{thm:red}). This means that competitive ratio guarantees in one setting determine the same guarantees in the other. Such a reduction implies the existence of algorithms with constant competitive ratio for online optimization problems with linear or submodular objective functions subject to matroid, matching, and knapsack constraints, for which we give explicit constructions. We also extend the framework to elements arriving in batches, which can have correlated weights or activity times within the batch, as described in the appendix of the paper.

\item[] \textbf{Blackbox reduction 2: from temporal OCRS to no-$\alpha$-regret algorithm.} Following the recent work by \citet{gergatsouli2022online} in the context of Pandora's box problems, we define the following extension of the problem to the partial-information setting. For each of the $T$ stages, the algorithm is given in input a new instance of the fully dynamic online selection problem. Activity times are fixed beforehand and known to the algorithm, while weights are chosen by an adversary, and revealed only after the selection at the current stage has been completed. 
In such setting, we show that an $\alpha$-competitive temporal OCRS can be exploited in the adversarial partial-information version of the problem, in order to build no-$\alpha$-regret algorithms with polynomial per-iteration running time. Regret is measured with respect to the cumulative weights collected by the best fixed selection policy in hindsight. We study three different settings: in the first setting, we study the \emph{full-feedback model} (i.e., the algorithm observes the entire utility function at the end of each stage). Then, we focus on the \emph{semi-bandit-feedback model}, in which the algorithm only receives information on the weights of the elements it selects. In such setting, we provide a no-$\alpha$-regret framework with $\tilde O(T^{1/2})$ upper bound on  cumulative regret in the case in which we have a ``white-box'' OCRS (i.e., we know the exact procedure run within the OCRS, and we are able to simulate it ex-post). Moreover, we also provide a no-$\alpha$-regret algorithm with $\tilde O(T^{2/3})$ regret upper bound for the case in which we only have oracle access to the OCRS (i.e., the OCRS is treated as a black-box, and the algorithm does not require knowledge about its internal procedures).

\end{itemize}

\subsection{Related Work}

In the first part of the paper, we deal with a setting where the  algorithm has complete information over the input but is unaware of the order in which elements arrive. In this context, Contention resolution schemes (CRS) were introduced by~\citet{CVZ11} as a powerful rounding technique in the context of submodular maximization. The CRS framework was extended to online contention resolution schemes (OCRS) for online selection problems by \citet{FSZ16}, who provided
constant competitive OCRSs for different problems, e.g. intersections of matroids, matchings, and prophet inequalities. We generalize the OCRS framework to a setting where elements are timed and cease to exist right after.

In the second part, we lift the complete knowledge assumption and work in an adversarial bandit setting, where at each stage the entire set of elements arrives, and we seek to select the ``best'' feasible subset. This is similar to the problem of \emph{combinatorial bandits} \cite{cesa2012combinatorial}, but unlike it, we aim to deal with combinatorial selection of \emph{timed} elements. In this respect, \emph{blocking bandits} \cite{BSSS19} model situations where played arms are blocked for a specific number of stages. Despite their contextual \cite{BPCS21}, combinatorial \cite{AtsidakouP0CS21}, and adversarial \cite{BishopCMT20} extensions, recent work on blocking bandits only addresses specific cases of the fully dynamic online selection problem \cite{dickerson2018allocation}, which we solve in entire generality, i.e. adversarially and for all packing constraints.

Our problem is also related to \emph{sleeping bandits} \cite{KNMS10}, in that the adversary decides which actions the algorithm can perform at each stage $t$. Nonetheless, a sleeping bandit adversary has to communicate all available actions to the algorithm before a stage starts, whereas our adversary sets arbitrary activity times for each element, choosing in what order elements arrive.

\section{Preliminaries}\label{sec:prel}

Given a finite set $\cX\subseteq\mathbb{R}^n$ and $\cY\subseteq 2^{\cX}$, let $\bone_{\cY}\in\{0,1\}^{|X|}$ be the characteristic vector of set $\cX$, and $\co{\cX}$ be the convex hull of $\cX$. We denote vectors by bold fonts. Given vector $\bx$, we denote by $x_i$ its $i$-th component. The set $\{1,2,\ldots,n\}$, with $n\in\mathbb{N}_{>0}$, is compactly denoted as $[n]$. Given a set $\cX$ and a scalar $\alpha\in\mathbb{R}$, let $\alpha\cX\defeq\mleft\{\alpha x: x\in\cX\mright\}$. Finally, given a discrete set $\cX$, we denote by $\Delta^{\cX}$ the $|\cX|$-simplex.

We start by introducing a general \emph{selection problem} in the standard (i.e., non-dynamic) case as studied by \citet{KleinbergW12} in the context of prophet inequalities. Let $\cE$ be the ground set and let $m\defeq |\cE|$.
Each element $e\in\cE$ is characterized by a collection of parameters $z_e$. In general, $z_e$ is a random variable drawn according to an element-specific distribution $\cZ_e$, supported over the joint set of possible parameters. In the standard (i.e., non-dynamic) setting,  $z_e$ just encodes the \emph{weight} associated to element $e$, that is  $z_e=(w_e)$, for some $w_e\in[0,1]$.\footnote{This is for notational convenience. In the dynamic case $z_e$ will contain other parameters in addition to weights.} In such case distributions $\cZ_e$ are supported over $[0,1]$. Random variables $\mleft\{z_e: e\in\cE\mright\}$ are independent, and $z_e$ is distributed according to $\cZ_e$.
An \emph{input sequence} is an ordered sequence of elements and weights such that every element in $\cE$ occurs exactly once in the sequence. The order is specified by an arrival time $s_e$ for each element $e$. Arrival times are such that $s_e\in [m]$ for all $e\in\cE$, and for two distinct $e, e'$ we have $s_e \neq s_{e'}$.  The order of arrival of the elements is a priori unknown to the algorithm, and can be selected by an adversary. In the standard full-information setting the distributions $\cZ_e$ can be chosen by an adversary, but they are known to the algorithm a priori.
We consider problems characterized by a family of \emph{packing constraints}. 
\begin{definition}[Packing Constraint]\label{def:packing}
	A family of constraints $\cF = (\cE, \cI)$, for ground set $\cE$ and independence family $\cI \subseteq 2^{\cE}$, is said to be \emph{packing} (i.e., \emph{downward-closed}) if, taken $A \in \cI$, and $B \subseteq A$, then $B \in \cI$.
\end{definition}

\noindent Elements of  $\cI$ are called \emph{independent sets}. Such family of constraints is closed under intersection, and encompasses matroid, knapsack, and matching constraints. 

\paragraph{Fractional LP formulation}
Even in the offline setting, in which the ordering of the input sequence $(s_e)_{e\in\cE}$ is known beforehand, determining an independent set of maximum cumulative weight may be \textsc{NP}-hard in the worst-case \citep{Fei98}. 
Then, we consider the relaxation of the problem in which we look for an optimal \emph{fractional solution}. The value of such solution is an upper bound to the value of the true offline optimum. Therefore, any algorithm guaranteeing a constant approximation to the offline fractional optimum immediately yields the same guarantees with respect to the offline optimum. 
Given a family of packing constraints $\cF=(\cE,\cI)$, in order to formulate the problem of computing the best fractional solution as a linear programming problem (LP) we introduce the notion of \emph{packing constraint polytope} $\cP_\cF \subseteq [0,1]^m$ which is such that
	$
	\cP_\cF := \co\left(\{\bone_{S} : S \in \cI\}\right).
	$
\noindent Given a non-negative submodular function $f: [0,1]^m \rightarrow \mathbb{R}_{\ge0}$, and a family of packing constraints $\cF$, an optimal \emph{fractional} solution can be computed via the LP $\max_{\bx\in\cP_\cF} f(\bx)$. If the goal is maximizing the cumulative sum of weights, the objective of the optimization problem is $\langle \bx,\bw\rangle$, where $\bw\defeq (w_1,\ldots,w_m)\in[0,1]^m$ is a vector specifying the weight of each element.
If we assume access to a polynomial-time separation oracle for $\cP_\cF$ such LP yields an optimal fractional solution in polynomial time. 

\begin{itemize}[leftmargin=-1pt]
\item[] \textbf{Online selection problem.} In the online version of the problem, given a family of packing constraints $\cF$, the goal is selecting an independent set whose cumulative weight is as large as possible. In such setting, the elements reveal one by one their realized $z_e$, following a fixed prespecified order unknown to the algorithm. Each time an element reveals $z_e$, the algorithm has to choose whether to select it or discard it, before the next element is revealed. Such decision is irrevocable. Computing the exact optimal solution to such online selection problems is intractable in general \citep{Fei98}, and the goal is usually to design approximation algorithms with good \emph{competitive ratio}.\footnote{The \emph{competitive ratio} is computed as the worst-case ratio between the value of the solution found by the algorithm and the value of an optimal solution.} In the remainder of the section we describe one well-known framework for such objective.
\item[] \textbf{Online contention resolution schemes.} \emph{Contention resolution schemes} were originally proposed by \citet{CVZ11} in the context of submodular function maximization, and later extended to online selection problems by \citet{FSZ16} under the name of \emph{online contention resolution schemes} (OCRS).
Given a fractional solution $\bx\in\cP_\cF$, an OCRS is an online rounding procedure yielding an independent set in $\cI$ guaranteeing a value close to that of $\bx$. 
Let $R(\bx)$ be a random set containing each element $e$ independently and with probability $x_e$. 
The set $R(\bx)$ may not be feasible according to constraints $\cF$. An OCRS essentially provides a procedure to construct a good feasible approximation by starting from the random set $R(\bx)$. Formally,
\end{itemize}

\begin{definition}[OCRS] Given a point $\bx\in\cP_\cF$ and the set of elements $R(\bx)$, elements $e\in\cE$ reveal one by one whether they belong to $R(\bx)$ or not. An OCRS chooses irrevocably whether to select an element in $R(\bx)$ before the next element is revealed. An OCRS for $\cP_\cF$ is an online algorithm that selects $S\subseteq R(\bx)$ such that $\bone_{S}\in\cP_\cF$.
\end{definition}

\noindent We will focus on \emph{greedy OCRS}, which were defined by \citet{FSZ16} as follows. 

\begin{definition}[Greedy OCRS]\label{def:OCRS}
	Let $\cP_\cF \subseteq [0,1]^m$ be the feasibility polytope for constraint family $\cF$. An OCRS $\pi$ for $\cP_\cF$ is called a greedy OCRS if, for every ex-ante feasible solution $\bx \in \cP_\cF$, it defines a packing subfamily of feasible sets $\cF_{\pi, \bx} \subseteq \cF$, and an element $e$ is selected upon arrival if, together with the set of already selected elements, the resulting set is in $\cF_{\pi, \bx}$. 
\end{definition}
\noindent A greedy OCRS is \emph{randomized} if, given $\bx$, the choice of $\cF_{\pi, \bx}$ is randomized, and \emph{deterministic} otherwise.
For $b,c\in[0,1]$, we say that a greedy OCRS $\pi$ is \emph{$(b,c)$-selectable} if, for each $e \in \cE$, and given $\bx\in b\cP_{\cF}$ (i.e., belonging to a down-scaled version of $\cP_\cF$),
\[
\Pr_{\pi, R(\bx)}\left[S \cup \{e\} \in \cF_{\pi, \bx} \quad \forall S \subseteq R(\bx), S \in \cF_{\pi, \bx}\right] \geq c.
\]
Intuitively, this means that, with probability at least $c$, the random set $R(\bx)$ is such that an element $e$ is selected no matter what other elements $I$ of $R(x)$ have been selected so far, as long as $I \in \cF_{\pi,\bx}$.
This guarantees that an element is selected with probability at least $c$ against any adversary, which implies a $bc$ competitive ratio with respect to the offline optimum (see Appendix \ref{app:ocrs} for further details).  
Now, we provide an example due to \citet{FSZ16} of a feasibility constraint family where OCRSs guarantee a constant competitive ratio against the offline optimum. We will build on this example throughout the paper in order to provide intuition for the main concepts. 

\begin{example}[Theorem 2.7 in \citep{FSZ16}]\label{ex:running}
Given a graph $G = (\cV, \cE)$, with $|\cE| = m$ edges, we consider a matching feasibility polytope $\cP_\cF = \left\{\bx \in [0, 1]^{m} :\sum_{e\in\delta(u)}x_e \leq 1 , \forall u \in \cV\right\}$, where $\delta(u)$ denotes the set of all adjacent edges to $u \in \cV$. Given $b\in[0,1]$, the OCRS takes as input $\bx \in b\cP_\cF$, and  samples each edge $e$ with probability $x_e$ to build $R(\bx)$. Then, it selects each edge $e\in R(\bx)$, upon its arrival, with probability $(1 - e^{-x_e})/x_e$ only if it is feasible. Then, the probability to select any edge $e = (u,v)$ (conditioned on being sampled) is 
\begin{align*}
    &\frac{1 - e^{-x_e}}{x_e} \cdot \prod_{e^{\prime} \in \delta(u) \cup \delta(v) \setminus \{e\}} e^{-x_{e^\prime}} \\
    & = \frac{1 - e^{-x_e}}{x_e} \cdot e^{-\sum_{e^{\prime} \in \delta(u) \cup \delta(v) \setminus \{e\}}x_{e^\prime}} \geq \frac{1 - e^{-x_e}}{x_e} \cdot e^{-2b} \\
    & \geq e^{-2b},
\end{align*}
where the inequality follows from $x_e \in b\cP_\cF$, i.e., $\sum_{e'\in\delta(u)\setminus \{e\}}x_{e'} \leq b - x_e$, and similarly for $\delta(v)$. Note that in order to obtain an unconditional probability, we need to multiply the above by a factor $x_e$.
\end{example}

We remark that this example resembles closely our introductory motivating application, where financial transactions need to be assigned to reviewers upon their arrival.
Moreover, \citet{FSZ16} give explicit constructions of $(b,c)$-selectable greedy OCRSs for knapsack, matching, matroidal constraints, and their intersection. We include a discussion of their feasibility polytopes in Appendix \ref{app:A}.
\citet{EFGT20} generalize the above online selection procedure to a setting where elements arrive in batches rather than one at a time; we provide a discussion of such setting in Appendix \ref{app:B}.
\section{Fully Dynamic Online Selection}\label{sec:temporal}

\noindent The \emph{fully dynamic online selection problem} is characterized by the definition of \emph{temporal packing constraints}. We generalize the online selection model (Section~\ref{sec:prel}) by introducing an \emph{activity time} $d_e \in [m]$ for each element. Element $e$ arrives at time $s_e$ and, if it is selected by the algorithm, it remains active up to time $s_e+d_e$ and ``blocks'' other elements from being selected. Elements arriving after that time can be selected by the algorithm. In this setting, each element $e\in\cE$ is characterized by a tuple of attributes $z_e := (w_e, d_e)$. Let $\cF^{\bd} \defeq (\cE, \cI^{\bd})$ be the family of \emph{temporal packing feasibility constraints} where elements block other elements in the same independent set according to activity time vector $\bd = (d_e)_{e \in \cE}$.
The goal of fully dynamic online selection is selecting an independent set in $\cI^{\bd}$ whose cumulative weight is as large as possible (i.e., as close as possible to the offline optimum). We can naturally extend the expression for packing polytopes in the standard setting to the temporal one for every feasibility constraint family, by exploiting the following notion of \emph{active elements}.

\begin{definition}[Active Elements]\label{def:active}
For element $e \in \cE$ and given $\{z_e\}_{e\in\cE}$, we denote the set of \emph{active elements} as $\cE_e \defeq \mleft\{e^\prime \in \cE : s_{e^\prime} \leq s_e \leq s_{e^\prime} + d_{e^\prime}\mright\}$.\footnote{Note that, since for distinct elements $e, e^\prime$, we have $s_{e^\prime} \neq s_e$, we can equivalently define the set of active elements as $\cE_e \defeq \left\{e^\prime \in \cE : s_{e^\prime} < s_e \leq s_{e^\prime} + d_{e^\prime}\right\} \cup \{e\}$.}
\end{definition}

\noindent In this setting, we don't need to select an independent set $S \in \cF$, but, in a less restrictive way, we only require that for each incoming element we select a feasible subset of the set of active elements.  
\begin{definition}[Temporal packing constraint polytope]\label{def:schedpackpol}
Given $\cF=(\cE,\cI)$, a \emph{temporal packing constraint polytope} $\cP^{\bd}_\cF \subseteq [0,1]^m$ is such that
$
    \cP^{\bd}_\cF \defeq \co\left(\{\bone_{S} : S \cap \cE_e \in \cI, \forall e \in \cE\}\right).
$
\end{definition}

\begin{observation}\label{obs:inf}
For a fixed element $e$, the temporal polytope is the convex hull of the collection containing all the sets such that $S \cap \cE_e$ is feasible. This needs to be true for all $e \in \cE$, meaning that we can rewrite the polytope and the feasibility set as $ \cP^{\bd}_\cF = \co\left(\bigcap_{e \in\cE}{\{\bone_{S} : S \cap \cE_e \in \cF\}}\right)$, and $\cI^{\bd} = \bigcap_{e \in \cE}{\{S : S \cap \cE_e \in \cI\}}$.
Moreover, when $\bd$ and $\bd^\prime$ differ for at least one element $e$, that is $d_e < d^\prime_e$, then $\cE_e \subseteq \cE^\prime_e$. Then, $\cP^{\bd}_\cF \supseteq \cP^{\bd^\prime}_\cF$, $\cI^{\bd} \supseteq \cI^{\bd^\prime}$.
\end{observation}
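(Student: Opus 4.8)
The plan is to handle the two assertions of the observation separately, since the first is a purely set-theoretic rewriting while the second is where the packing structure does the work. For the identity $\cI^{\bd} = \bigcap_{e \in \cE}\{S : S \cap \cE_e \in \cI\}$ (and correspondingly $\cP^{\bd}_\cF = \co\mleft(\bigcap_{e \in\cE}\{\bone_{S} : S \cap \cE_e \in \cI\}\mright)$), I would simply unwind the universal quantifier in \Cref{def:schedpackpol}: a set $S$ satisfies ``$S \cap \cE_e \in \cI$ for all $e \in \cE$'' exactly when it lies in every member of the family $\{\,S : S \cap \cE_e \in \cI\,\}$ indexed by $e$, which is by definition their intersection over $e \in \cE$. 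Passing to characteristic vectors and taking convex hulls transports the equality from the feasibility family to the polytope; no property of $\cI$ beyond being a set family is used here.

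For the monotonicity I would fix the arrival times $(s_e)_{e \in \cE}$ and assume $\bd \le \bd'$ componentwise, the stated single-coordinate case $d_e < d'_e$ being a special instance from which the general monotone case follows by combining coordinates. The first step is the pointwise inclusion of active sets: for each fixed $e$, an element $e'$ lies in $\cE_e$ iff $s_{e'} \le s_e \le s_{e'} + d_{e'}$, and since increasing $d_{e'}$ only enlarges the upper endpoint $s_{e'} + d_{e'}$, the same $e'$ still satisfies $s_{e'} \le s_e \le s_{e'} + d'_{e'}$; hence $\cE_e \subseteq \cE'_e$ for every $e$.

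The only step with genuine content is deducing $\cI^{\bd} \supseteq \cI^{\bd'}$ from $\cE_e \subseteq \cE'_e$, and here I would invoke the downward-closedness of $\cI$ from \Cref{def:packing}. Take any $S \in \cI^{\bd'}$, so that $S \cap \cE'_e \in \cI$ for every $e$; since $S \cap \cE_e \subseteq S \cap \cE'_e$ and $\cI$ is packing, $S \cap \cE_e \in \cI$ as well, whence $S \in \cI^{\bd}$. Passing once more to characteristic vectors and convex hulls yields $\cP^{\bd}_\cF \supseteq \cP^{\bd'}_\cF$. The point to flag as the crux is exactly this use of downward-closedness: the larger active set $\cE'_e$ imposes a \emph{stronger} feasibility requirement, so it is only because subsets of independent sets remain independent that restricting to the smaller $\cE_e$ is automatically feasible; without the packing assumption the inclusion could reverse or fail.
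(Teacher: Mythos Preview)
The paper states this as an observation without supplying a proof, treating both parts as immediate from the definitions; your argument is correct and fills in exactly the details one would expect---the intersection rewriting is pure unwinding of the universal quantifier in \Cref{def:schedpackpol}, and the monotonicity step correctly isolates downward-closedness of $\cI$ as the only substantive ingredient. There is nothing to compare against, and your proof is the canonical one.
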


We now extend \Cref{ex:running} to account for activity times. In Appendix \ref{app:A} we also work out the reduction from standard to \emph{temporal} packing constraints for a number of examples, including rank-1 matroids (single-choice), knapsack, and general matroid constraints.

\begin{example}\label{ex:running2}
We consider the temporal extension of the matching polytope presented in \Cref{ex:running}, that is \[\cP^{\bd}_\cF = \left\{\by \in [0, 1]^{m} :\sum_{e\in\delta(u) \cap \cE_e}x_e \leq 1 , \forall u \in V, \forall e \in \cE\right\}.\] Let us use the same OCRS as in the previous example, but where ``feasibility'' only concerns the subset of active edges in $\delta(u) \cup \delta(v)$. The probability to select an edge $e = (u,v)$ is
\begin{align*}
    \frac{1 - e^{-x_e}}{x_e} \cdot \hspace{-0.2cm} \prod_{e^{\prime} \in \delta(u) \cup \delta(v) \cap \cE_e \setminus \{e\}} \hspace{-0.8cm} e^{-x_{e^\prime}}
    \geq \frac{1 - e^{-x_e}}{x_e} \cdot e^{-2b} \geq e^{-2b},
\end{align*}
which is obtained in a similar way to \Cref{ex:running}.
\end{example}

The above example suggests to look for a general reduction that maps an OCRS for the standard setting, to an OCRS for the temporal setting, while achieving at least the same competitive ratio.

\section{OCRS for Fully Dynamic Online Selection}\label{sec:temporal greedy ocrs}

The first black-box reduction which we provide consists in showing that a $(b,c)$-selectable greedy OCRS for standard packing constraints implies the existence of a $(b,c)$-selectable greedy OCRS for temporal constraints. 
In particular, we show that the original greedy OCRS working for $\bx \in b\cP_{\cF}$ can be used to construct another greedy OCRS for $\by \in b\cP^{\bd}_{\cF}$. 
To this end, Algorithm \ref{alg:spattemp} provides a way of exploiting the original OCRS $\pi$ in order to manage temporal constraints. 
For each element $e$, and given the induced subfamily of packing feasible sets $\cF_{\pi,\by}$, the algorithm checks whether the set of previously selected elements $S^{\bd}$ which are still active in time, together with the new element $e$, is feasible with respect to $\cF_{\pi,\by}$. If that is the case, the algorithm calls the OCRS $\pi$. Then, if the OCRS $\pi$ for input $\by$ decided to select the current element $e$, the algorithm adds it to $S^{\bd}$, otherwise the set remains unaltered.
We remark that such a procedure is agnostic to whether the original greedy OCRS is deterministic or randomized.
We observe that, due to a larger feasibility constraint family, the number of independent sets have increased with respect to the standard setting. However, we show that this does not constitute a problem, and an equivalence between the two settings can be established through the use of Algorithm \ref{alg:spattemp}.
The following result shows that Algorithm~\ref{alg:spattemp} yields a $(b,c)$-selectable greedy OCRS for temporal packing constraints.

\RestyleAlgo{ruled}

\begin{algorithm}[tb]
\caption{\textbf{Greedy OCRS Black-box Reduction}}\label{alg:spattemp}
\textbf{Input:} Feasibility families $\cF$ and $\cF^{\bd}$, polytopes $\cP_{\cF}$ and $\cP^{\bd}_{\cF}$, OCRS $\pi$ for $\cF$, a point $\bx \in b\cP^{\bd}_{\cF}$\;
Initialize $S^{\bd} \leftarrow \emptyset$\;
Sample $R(\bx)$ such that $\emph{\Pr}\left[e \in R(\bx)\right] = x_e$\;
\For{$e \in \cE$}
{
    Upon arrival of element $e$, compute the set of currently active elements $\cE_e$\;
    \eIf{$(S^{\bd} \cap \cE_e) \cup \{e\} \in \cF_{\pi, \by}$}
    {
        Execute the original greedy OCRS $\pi(\bx)$\;
        Update $S^{\bd}$ accordingly\;
    }{
        Discard element $e$\;
    }
}
\textbf{return} set $S^{\bd}$\;
\end{algorithm}

\begin{theorem}\label{thm:red}
Let $\cF, \cF^{\bd}$ be the standard and temporal packing constraint families, respectively, and let their corresponding polytopes be  $\cP_{\cF}$ and $\cP^{\bd}_{\cF}$. Let $\bx \in b\cP_{\cF}$ and $\by \in b\cP^{\bd}_{\cF}$, and consider a $(b,c)$-selectable greedy OCRS $\pi$ for $\cF_{\pi, \bx}$. Then, Algorithm~\ref{alg:spattemp} equippend with $\pi$ is a $(b,c)$-selectable greedy OCRS for $\cF^{\bd}_{\pi, \by}$.
\end{theorem}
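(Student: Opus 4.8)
The plan is to prove the statement in three movements: exhibit the packing subfamily that the reduced scheme implicitly selects against, argue that Algorithm~\ref{alg:spattemp} is a genuine greedy OCRS for it, and finally establish $(b,c)$-selectability by \emph{localizing} the guarantee of $\pi$ to the elements that are active at each arrival. The subfamily to work with is
$\cF^{\bd}_{\pi,\by}\defeq\mleft\{T : T\cap\cE_e\in\cF_{\pi,\by}\ \forall e\in\cE\mright\}$: since $\cF_{\pi,\by}\subseteq\cF$ is packing and $T\cap\cE_e\subseteq T$, downward-closedness immediately gives both that $\cF^{\bd}_{\pi,\by}$ is packing and that $\cF^{\bd}_{\pi,\by}\subseteq\cI^{\bd}$, using the characterization of $\cI^{\bd}$ in \Cref{obs:inf}.

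The first real step is a locality lemma. When element $e$ arrives, the set $S^{\bd}$ of already-selected elements consists only of elements whose arrival time is strictly smaller than $s_e$, and I claim that for every $e'$ with $e\in\cE_{e'}$ one has $S^{\bd}\cap\cE_{e'}\subseteq S^{\bd}\cap\cE_e$: indeed, if $e''\in S^{\bd}$ is active at $s_{e'}$ then $s_{e''}+d_{e''}\ge s_{e'}\ge s_e$ while $s_{e''}<s_e$, so $e''$ is also active at $s_e$. Combined with downward-closedness of $\cF_{\pi,\by}$, this shows that the single check the algorithm performs, $(S^{\bd}\cap\cE_e)\cup\{e\}\in\cF_{\pi,\by}$, already implies $(S^{\bd}\cup\{e\})\cap\cE_{e'}\in\cF_{\pi,\by}$ for \emph{every} $e'$ (the cases $e\notin\cE_{e'}$, and $e'$ preceding $e$, being immediate). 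Hence the invariant $S^{\bd}\in\cF^{\bd}_{\pi,\by}$ is maintained throughout, so Algorithm~\ref{alg:spattemp} is a greedy OCRS for $\cF^{\bd}_{\pi,\by}$, and the fate of $e$ is governed entirely by the active restriction $R(\by)\cap\cE_e$.

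For $(b,c)$-selectability I would localize $\pi$'s guarantee to the active window. The key auxiliary fact is that $\by|_{\cE_e}\in b\,\cP_{\cF|_{\cE_e}}$, i.e.\ the restriction of the temporal point to the active coordinates lies in the down-scaled polytope of the restricted constraint family: writing $\by/b=\sum_k\lambda_k\bone_{S_k}$ with each $S_k\cap\cE_e\in\cI$ and restricting to $\cE_e$ exhibits $\by|_{\cE_e}/b$ as a convex combination of characteristic vectors of independent subsets of $\cE_e$. Because the locality lemma shows that only elements of $\cE_e$ can obstruct the selection of $e$, the event ``$e$ can be appended to every $S\in\cF_{\pi,\by}$ with $S\subseteq R(\by)\cap\cE_e$'' is exactly $\pi$'s selectability event for the valid standard sub-instance $(\cE_e,\by|_{\cE_e})$; by $(b,c)$-selectability of $\pi$ this event has probability at least $c$, for every realization of the adversarial arrival order. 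Taking $S=S^{\bd}\cap\cE_e$ then yields that $e$ is selected with probability at least $c$, which is the desired guarantee.

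The main obstacle I anticipate is precisely the reduction of the global, order-oblivious feasibility requirement to the single active-window check: a priori the constraints $T\cap\cE_{e'}\in\cF_{\pi,\by}$ for future $e'$ (with $e\in\cE_{e'}$) involve elements arriving after $e$ and invisible in $\cE_e$, and a naive ``for all subsets'' reading of selectability even fails outright (e.g.\ in the matching polytope of \Cref{ex:running2}, many short conflicting edges tiling the activity interval of $e$ would drive an order-independent bound to zero). The locality lemma, together with the fact that in the greedy run $e$ \emph{blocks} the later conflicting elements rather than being blocked by them, is what confines the analysis to $\cE_e$, where $\pi$'s native $(b,c)$-selectability applies verbatim. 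A secondary point to handle with care is that $\pi$ is invoked as a black box on the generally infeasible full point $\by$, so one must check that its decision on $e$ depends only on the active restriction $\by|_{\cE_e}$ — which is exactly what the locality lemma licenses.
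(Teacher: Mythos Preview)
Your approach is essentially the paper's: you define the same temporal subfamily $\cF^{\bd}_{\pi,\by}=\bigcap_{e}\{T:T\cap\cE_e\in\cF_{\pi,\by}\}$, verify greedyness by checking that the single active-window test suffices, and reduce selectability to $\pi$'s guarantee on the active slice. Your locality lemma and the explicit check that $\by|_{\cE_e}\in b\,\cP_{\cF|_{\cE_e}}$ make rigorous what the paper compresses into a one-line ``injection'' between the standard and temporal feasible sets; the paper does not isolate either fact but uses them implicitly when equating the temporal selectability event to the standard one restricted to $\cE_e$.
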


\begin{proof}

Let us denote by $\hat\pi$ the procedure described in Algorithm \ref{alg:spattemp}. First, we show that $\hat\pi$ is a greedy OCRS for $\cF^{\bd}$.
\begin{itemize}[leftmargin=*]
\item[] \textbf{Greedyness.} It is clear from the setting and the construction that elements arrive one at a time, and that $\hat\pi$ irrevocably selects an incoming element only if it is feasible, and before seeing the next element. Indeed, in the \emph{if} statement of Algorithm \ref{alg:spattemp}, we check that the active subset of the elements selected so far, together with the new arriving element $e$, is feasible against the subfamily $\cF_{\pi, \bx} \subseteq \cF$. 
Constraint subfamily $\cF_{\pi, \bx}$ is induced by the original OCRS $\pi$, and  point $\bx$ belongs to the polytope $b\cP^{\bd}_{\cF}$. 
Note that we do not necessarily add element $e$ to the running set $S^{\bd}$, even though feasible, but act as the original greedy OCRS would have acted.
All that is left to be shown is that such a procedure defines a subfamily of feasibility constraints $\cF^{\bd}_{\pi, \bx} \subseteq \cF^{\bd}$. By construction, on the arrival of each element $e$, we guarantee that $S^{\bd}$ is a set such that its subset of active elements is feasible. This means that $S^{\bd} \cap \cE_e \in \cF_{\pi, \bx} \subseteq \cF$. Then,
\begin{align*}
    S^{\bd} \in \cF^{\bd}_{\pi, \bx} := \bigcap_{e \in \cE}{\{S : S \cap \cE_e \in \cF_{\pi, \bx}\}}.
\end{align*}
Finally, $\cF_{\pi, \bx} \subseteq \cF$ implies that $\cF^{\bd}_{\pi, \bx} \subseteq \cF^{\bd}$, which shows that $\hat\pi$ is greedy.
With the above, we can now turn to demonstrate $(b,c)$-selectability. 

\item[] \textbf{Selectability.} Upon arrival of element $e \in \cE$, let us consider $S$ and  $S^{\bd}$ to be the sets of elements already selected by $\pi$ and  $\hat \pi$,  respectively. By the way in which the constraint families are defined, and by construction of $\hat \pi$, we can observe that, given $\bx\in b\cP_{\cF}^{\bd}$ and $\by\in b\cP_{\cF}$, for all $S \subseteq R(\by)$ such that $S \cup \{e\} \in \cF_{\pi, \by}$, there always exists a set $S^{\bd} \subseteq R(\bx)$ such that $(S^{\bd} \cap \cE_e) \cup \{e\} \in \cF_{\pi, \bx}$. This establishes an injection between the selected set under standard constraints, and its counterpart under temporal constraints. We observe that, for all $e \in \cE$ and $\bx \in b\cP^{\bd}_{\cF}$,
\begin{multline*}
    \Pr\mleft[S^{\bd} \cup \{e\} \in \cF^{\bd}_{\pi, \bx} \quad \forall S^{\bd} \subseteq R(\bx), S^{\bd} \in \cF^{\bd}_{\pi, \bx}\mright] = \\ \Pr\mleft[(S^{\bd} \cap \cE_e) \cup \{e\} \in \cF_{\pi, \bx} \,\, \forall S^{\bd} \subseteq R(\bx), S^{\bd} \cap \cE_e \in \cF_{\pi, \bx}^{\bd}\mright].
\end{multline*}
Hence, since for greedy OCRS $\pi$ and $\by \in b\cP_{\cF}$, we have that
$\Pr\left[S \cup \{e\} \in \cF_{\pi, \by} \,\, \forall S \subseteq R(\by), S \in \cF_{\pi, \by}\right] \geq c$,
we can conclude by the injection above that
\begin{multline*}
    \Pr\left[(S^{\bd} \cap \cE_e) \cup \{e\} \in \cF_{\pi, \bx}\right.\\\left. \forall S^{\bd} \subseteq R(\bx), S^{\bd} \cap \cE_e \in \cF_{\pi, \bx}\right] \geq c.
\end{multline*}
\end{itemize}
\noindent The theorem follows.
\end{proof}

We remark that the above reduction is agnostic to the weight scale, i.e., we need not assume that $w_e \in [0,1]$ for all $e \in E$. In order to further motivate the significance of Algorithm \ref{alg:spattemp} and  Theorem \ref{thm:red}, in the Appendix we explicitly reduce the standard setting to the fully dynamic one for single-choice, and provide a general recipe for all packing constraints.
\section{Fully Dynamic Online Selection under Partial Information}\label{sec:bandit}

In this section, we study the case in which the decision-maker has to act under partial information. In particular, we focus on the following online sequential extension of the full-information problem: at each stage $t\in [T]$, a decision maker faces a new instance of the fully dynamic online selection problem. 
An unknown vector of weights $\bw_t\in[0,1]^{|E|}$ is chosen by an adversary at each stage $t$, while feasibility set $\cF^{\bd}$ is known and fixed across all $T$ stages. 
This setting is analogous to the one recently studied by \citet{gergatsouli2022online} in the context of \emph{Pandora's box problems}. 
A crucial difference with the online selection problem with full-information studied in Section~\ref{sec:temporal greedy ocrs} is that, at each step $t$, the decision maker has to decide whether to select or discard an element before observing its weight.
In particular, at each $t$, the decision maker takes an action $\ba_t\defeq \bone_{S^{\bd}_t}$, where $S^{\bd}_t\in \cF^{\bd}$ is the feasible set selected at stage $t$. The choice of $\ba_t$ is made before observing $\bw_t$.
The objective of maximizing the cumulative sum of weights is encoded in the reward function $f:[0,1]^{2m}\ni(\ba,\bw)\mapsto \langle \ba,\bw\rangle\in[0,1]$, which is the reward obtained by playing $\ba$ with weights $\bw = (w_{e})_{e \in \cE}$.~\footnote{The analysis can be easily extended to arbitrary functions linear in both terms.}

In this setting, we can think of $\cF^{\bd}$ as the set of \emph{super-arms} in a combinatorial online optimization problem. Our goal is designing online algorithms which have a performance \emph{close} to that of the best fixed super-arm in hindsight.\footnote{As we argue in  Appendix \ref{app:C} it is not possible to be competitive with respect to more powerful benchmarks.} In the analysis, as it is customary when the online optimization problem has an NP-hard offline counterpart, we resort to the notion of $\alpha$-regret.
In particular, given a set of feasible actions $\cX$, we define an algorithm's $\alpha$-regret up to time $T$ as
$$
    \Regret\defeq \alpha\,\max_{\bx\in\cX}\mleft\{\sum_{t=1}^T f(\bx,\bw_t)\mright\}-\E\mleft[\sum_{t=1}^T f(\bx_t,\bw_t)\mright],
$$
where $\alpha\in(0,1]$ and $\bx_t$ is the strategy output by the online algorithm at time $t$. We say that an algorithm has the \emph{no-$\alpha$-regret} property if $\Regret/T\to0$ for $T\to\infty$.

The main result of the section is providing a black-box reduction that yields a no-$\alpha$-regret algorithm for any fully dynamic online selection problem admitting a temporal OCRS. We provide no-$\alpha$-regret frameworks for three scenarios:
\begin{itemize}[leftmargin=*]
    \item \emph{full-feedback model}: after selecting $\ba_t$ the decision-maker observes the exact reward function $f(\cdot,\bw_t)$.
    
    \item \emph{semi-bandit feedback with white-box OCRS}: after taking a decision at time $t$, the algorithm observes $w_{t,e}$ for each element $e\in S^{\bd}_t$ (i.e., each element selected at $t$). Moreover, the decision-maker has exact knowledge of the procedure employed by the OCRS, which can be easily simulated. 
    
    \item \emph{semi-bandit feedback with oracle access to the OCRS}: the decision maker has semi-bandit feedback and the OCRS is given as a black-box which can be queried once per step $t$.
\end{itemize}

\subsection{Full-feedback Setting}

In this setting, after selecting $\ba_t$, the decision-maker gets to observe the reward function $f(\cdot,\bw_t)$. In order to achieve performance close to that of the best fixed super-harm in hindsight the idea is to employ the $\alpha$-competitive OCRS designed in Section~\ref{sec:temporal greedy ocrs} by feeding it with a fractional solution $\bx_t$ computed by considering the weights selected by the adversary up to time $t-1$.\footnote{We remark that a $(b,c)$-selectable OCRS yields a $bc$ competitive ratio. In the following, we let $\alpha\defeq bc$.}

Let us assume to have at our disposal a no-$\alpha$-regret algorithm for decision space $\cP^{\bd}_{\cF}$. We denote such regret minimizer as $\regmin$, and we assume it offers two basic operations: i) $\regmin.\rec()$ returns a vector in $\cP^{\bd}_{\cF}$; ii) $\regmin.\update(f(\cdot,\bw))$ updates the internal state of the regret minimizer using feedback received by the environment in the form of a reward function $f(\cdot,\bw)$.
Notice that the availability of such component is not enough to solve our problem since at each $t$ we can only play a super-arm $\ba_t\in\{0,1\}^m$ feasible for $\cF^{\bd}$, and not the strategy $\bx_t\in\cP^{\bd}_{\cF}\subseteq [0,1]^m$ returned by $\regmin$.
The decision-maker can exploit the subroutine $\regmin$ together with a temporal greedy OCRS $\hat\pi$ by following Algorithm~\ref{alg:alg full info}. We can show that, if the algorithm employs a regret minimizer for $\cP^{\bd}_{\cF}$ with a sublinear cumulative regret upper bound of $\cumreg$, the following result holds.
 \begin{algorithm}[!tp]
 	\caption{\textsc{Full-Feedback Algorithm}}
 	\label{alg:alg full info}
 	    \textbf{Input:} $T$, $\cF^{\bd}$, temporal OCRS $\hat\pi$, subroutine $\regmin$\\
		Initialize $\regmin$ for strategy space $\cP^{\bd}_{\cF}$\label{line:compute strategy}\\
		\For{$t\in[T]$}{
		    $\bx_t\gets \regmin.\rec()$\\
		    $\ba_t\gets \textnormal{execute OCRS } \hat\pi \textnormal{ with input } \bx_t$\\ 
		    Play $\ba_t$, and subsequently observe $f(\cdot,\bw_t)$\\
		    $\regmin.\update(f(\cdot,\bw_t))$
		}
\end{algorithm}

\begin{restatable}{theorem}{regFullInfo}\label{thm:reg full info}
Given a regret minimizer $\regmin$ for decision space $\cP^{\bd}_{\cF}$ with cumulative regret upper bound $\cumreg$, and an $\alpha$-competitive temporal greedy OCRS, Algorithm~\ref{alg:alg full info} provides 
\[
\alpha \max_{S\in\cI^{\bd}}\sum_{t=1}^T f(\bone_S,\bw_t) - \E\mleft[\sum_{t=1}^T f(\ba_t,\bw_t)\mright]\le \cumreg.
\]
\end{restatable}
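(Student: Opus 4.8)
The plan is to separate the two sources of loss in Algorithm~\ref{alg:alg full info}: the \emph{rounding loss} incurred by the temporal OCRS $\hat\pi$ when it turns the fractional recommendation $\bx_t$ into an integral super-arm $\ba_t$, and the \emph{learning loss} incurred by the subroutine $\regmin$ when it competes against the best fixed fractional point in $\cP^{\bd}_{\cF}$. I would bound each in isolation and then chain the two inequalities.

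First I would isolate the per-stage guarantee of the OCRS. Since $\hat\pi$ is $\alpha$-competitive (with $\alpha=bc$ coming from $(b,c)$-selectability), on any fractional input $\bx_t\in\cP^{\bd}_{\cF}$ it produces a random feasible set $S^{\bd}_t$ whose indicator $\ba_t=\bone_{S^{\bd}_t}$ satisfies, componentwise, $\E[a_{t,e}\mid\bx_t]\ge\alpha\,x_{t,e}$ for every $e\in\cE$: each element survives sampling and selection with probability at least $\alpha x_{t,e}$, for any adversarial arrival order. Because $f(\cdot,\bw_t)$ is linear in its first argument and $\bw_t$ is independent of the internal randomness of $\hat\pi$ conditioned on $\bx_t$, linearity of expectation yields the key inequality
\[
\E\mleft[f(\ba_t,\bw_t)\mid\bx_t\mright]=\sum_{e\in\cE}\E\mleft[a_{t,e}\mid\bx_t\mright]\,w_{t,e}\ge\alpha\sum_{e\in\cE}x_{t,e}\,w_{t,e}=\alpha\,f(\bx_t,\bw_t).
\]
Taking total expectations (using the tower property over the randomness of $\bx_t$) and summing over $t$ gives $\E[\sum_{t}f(\ba_t,\bw_t)]\ge\alpha\,\E[\sum_{t}f(\bx_t,\bw_t)]$.

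Next I would invoke the regret guarantee of $\regmin$. Since $\regmin$ minimizes regret over the decision space $\cP^{\bd}_{\cF}$ against the linear reward functions $f(\cdot,\bw_t)$, which are revealed in full at each step, its cumulative-regret bound gives $\E[\sum_{t}f(\bx_t,\bw_t)]\ge\max_{\bx\in\cP^{\bd}_{\cF}}\sum_{t}f(\bx,\bw_t)-\cumreg$. Crucially, because $f(\cdot,\bw_t)$ is linear and $\cP^{\bd}_{\cF}=\co(\{\bone_S:S\in\cI^{\bd}\})$, the linear maximum over the polytope is attained at an integral vertex, so $\max_{\bx\in\cP^{\bd}_{\cF}}\sum_{t}f(\bx,\bw_t)=\max_{S\in\cI^{\bd}}\sum_{t}f(\bone_S,\bw_t)$, i.e.\ the fractional and integral benchmarks coincide. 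Multiplying the regret bound by $\alpha>0$, chaining with the OCRS inequality of the previous paragraph, and finally using $\alpha\le1$ to replace $\alpha\,\cumreg$ by $\cumreg$ delivers exactly the claimed bound.

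The main obstacle I anticipate is the careful bookkeeping of the two independent sources of randomness --- the (possibly randomized) recommendation $\bx_t$ of $\regmin$ and the internal coin flips of $\hat\pi$ --- together with ensuring that the adversary's weights $\bw_t$ are independent of $\hat\pi$'s randomness given $\bx_t$, so that the conditional OCRS inequality can be averaged over $\bx_t$ without the adversary exploiting the rounding. Once this conditioning is set up correctly, the remainder is routine: linearity of expectation, the per-coordinate selectability bound, and the vertex-optimality of linear objectives over $\cP^{\bd}_{\cF}$.
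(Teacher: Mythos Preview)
Your proposal is correct and follows essentially the same route as the paper's proof: the paper likewise chains the per-stage OCRS guarantee $\E[f(\ba_t,\bw_t)]\ge\alpha\,f(\bx_t,\bw_t)$ with the regret bound of $\regmin$, and then uses that the linear optimum over $\cP^{\bd}_{\cF}$ is attained at an integral vertex. Your treatment of the conditioning on $\bx_t$ and the explicit use of $\alpha\le 1$ to pass from $\alpha\,\cumreg$ to $\cumreg$ are in fact a bit more careful than the paper's own write-up.
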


\noindent Since we are assuming the existence of a polynomial-time separation oracle for the set $\cP_{\cF}^{\bd}$, then the LP $\argmax_{\bx\in\cP^{\bd}_{\cF}} f( \bx, \bw)$ can be solved in polynomial time for any $\bw$. Therefore, we can instantiate a regret minimizer for $\cP_{\cF}^{\bd}$ by using, for example, \emph{follow-the-regularised-leader} which yields $\cumreg\le\tilde O(m\sqrt{T})$~\cite{orabona2019modern}.

\subsection{Semi-Bandit Feedback with White-Box OCRS}

In this setting, given a temporal OCRS $\hat\pi$, it is enough to show that we can compute the probability that a certain super-arm $\ba$ is selected by $\hat\pi$ given a certain order of arrivals at stage $t$ and a vector of weights $\bw$. 
If that is the case, we can build a no-$\alpha$-regret algorithm with regret upper bound of $\tilde O(m\sqrt{T})$ by employing Algorithm~\ref{alg:alg full info} and by instantiating the regret minimizer $\regmin$ as the \emph{online stochastic mirror descent} (OSMD) framework by \citet{audibert2014regret}. We observe that the regret bound obtained is this way is tight in the semi-bandit setting \cite{audibert2014regret}.
Let $q_t(e)$ be the probability with which our algorithm selects element $e$ at time $t$. Then, we can equip OSMD with the following unbiased estimator of the vector of weights: $\hat w_{t,e} \defeq w_{t,e} a_{t,e}/q_t(e)$.~\footnote{We observe that $\hat w_{t,e}$ is equal to 0 when $e$ has not been selected at stage $t$ because, in that case, $a_{t,e}=0$.} 
In order to compute $q_t(\cdot)$ we need to have observed the order of arrival at stage $t$, the weights corresponding to super-arm $\ba_t$, and we need to be able to compute the probability with which the OCRS selected $e$ at $t$. This the reason for which we talk about ``white-box'' OCRS, as we need to simulate ex post the procedure followed by the OCRS in order to compute $q_t(\cdot)$.
When we know the procedure followed by the OCRS, we can always compute $q_t(e)$ for any element $e$ selected at stage $t$, since at the end of stage $t$ we know the order of arrival, weights for selected elements, and the initial fractional solution $\bx_t$. 
We provide further intuition as for how to compute such probabilities through the running example of matching constraints. 

\begin{example}\label{ex:running3}
Consider Algorithm~\ref{alg:alg full info} initialized with the OCRS of \Cref{ex:running}. Given stage $t$, we can safely limit our attention to selected edges (i.e., elements $e$ such that $a_{t,e}=1$). Indeed, all other edges will either be unfeasible (which implies that the probability of selecting them is $0$), or they were not selected despite being feasible. Consider an arbitrary element $e$ among those selected. Conditioned on the past choices up to element $e$, we know that $e \in \ba_t$ will be feasible with certainty, and thus the (unconditional) probability it is selected is simply $q_t(e) = 1 - e^{-y_{t, e}}$. 
\end{example}

\subsection{Semi-Bandit Feedback and Oracle Access to OCRS}

\begin{algorithm}[tb]
 	\caption{\textsc{Semi-Bandit-Feedback Algorithm with Oracle Access to OCRS}}
 	\label{alg:alg semi bandit}
 	{\bf Input:} $T$, $\cF^{\bd}$, temporal OCRS $\hat \pi$, full-feedback algorithm $\regmin$ for decision space $\cP_{\cF}^{\bd}$\\
 	Let $Z$ be initialized as in Theorem~\ref{thm:semi bandit}, and initialize \textsc{RM} appropriately\\
 	\For{$\tau=1,\ldots,Z$}{
 	    $I_\tau\gets\mleft\{(\tau-1)\frac{T}{Z}+1,\ldots,\tau\frac{T}{Z}\mright\}$\\
 	    Choose a random permutation $p:[m]\to \cE$, and $t_1,\ldots,t_{m}$ stages at random from $I_\tau$\\
 	    $\bx_\tau\gets\textsc{RM}.\rec()$\\
 	    \For{$t=(\tau-1)\frac{T}{Z}+1,\ldots,\tau\frac{T}{Z}$}{
 	        \If{$t=t_j$ \textnormal{for some} $j\in[m]$}{
 	            $\bx_t\gets \bone_{S^{\bd}}$ for a feasible set $S^{\bd}$ containing $p(j)$
 	        }\Else{
 	            $\bx_t\gets\bx_\tau$
 	        }
 	    Play $\ba_t$ obtained from the OCRS $\hat \pi$ executed with fractional solution $\bx_t$\\
 	    }
 	    Compute estimators $\tilde f_\tau(e)$ of $ f_\tau(e)\defeq \frac{1}{|I_\tau|}\sum_{t\in I_\tau} f(\bone_e,\bw_t)$ for each $e\in \cE$\\
 	    $\regmin.\update\mleft(\tilde f_\tau(\cdot)\mright)$
 	}
\end{algorithm}

As in the previous case, at each stage $t$ the decision maker can only observe the weights associated to each edge selected by $\ba_t$. Therefore, they have no counterfactual information on their reward had they selected a different feasible set. On top of that, we assume that the OCRS is given as a black-box, and therefore we cannot compute ex post the probabilities $q_t(e)$ for selected elements. 
However, we show that it is possible to tackle this setting by exploiting a reduction from the semi-bandit feedback setting to the full-information feedback one. In doing so, we follow the approach first proposed by  \citet{awerbuch2008online}.
The idea is to split the time horizon $T$ into a given number of equally-sized blocks. Each block allows the decision maker to simulate a single stage of the full information setting. 
We denote the number of blocks by $Z$, and each block $\tau\in [Z]$ is composed by a sequence of consecutive stages $I_\tau$.
Algorithm~\ref{alg:alg semi bandit} describes the main steps of our procedure. In particular, the algorithm employs a procedure \textsc{RM}, an algorithm for the full feedback setting as the one described in the previous section, that exposes an interface with the two operation of a traditional regret minimizer. 
During each block $\tau$, the full-information subroutine is used to compute a vector $\bx_\tau$. Then, in most stages of the window $I_\tau$, the decision $\ba_t$ is computed by feeding $\bx_\tau$ to the OCRS. A few stages are chosen uniformly at random to estimate utilities provided by other feasible sets (\emph{i.e.}, exploration phase). 
After the execution of all the stages in the window $I_\tau$, the algorithm computes estimated reward functions and uses them to update the full-information regret minimizer. 

Let $p:[m]\to\cE$ be a random permutation of elements in $\cE$. Then, for each $e\in \cE$, by letting $j$ be the index such that $p(j)=e$ in the current block $\tau$, an unbiased estimator $\tilde f_\tau(e)$ of $ f_\tau(e)\defeq \frac{1}{|I_\tau|}\sum_{t\in I_\tau} f(\bone_e,\bw_t)$ can be easily obtained by setting $\tilde f_\tau(e)\defeq f(\bone_e,\bw_{t_j})$. Then, it is possible to show that our algorithm provides the following guarantees.

\begin{restatable}{theorem}{semiBandit}\label{thm:semi bandit}
Given a temporal packing feasibility set $\cF^{\bd}$, and an $\alpha$-competitive OCRS $\hat \pi$, let $Z=T^{2/3}$, and the full feedback subroutine \textsc{RM} be defined as per Theorem \ref{thm:reg full info}. Then Algorithm~\ref{alg:alg semi bandit} guarantees that  
\[
\alpha \max_{S\in\cI^{\bd}}\sum_{t=1}^T f(\bone_S,\bw_t) - \E\mleft[\sum_{t=1}^T f(\ba_t,\bw_t) \mright] \le \tilde O(T^{2/3}).
\]
\end{restatable}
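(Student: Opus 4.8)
The plan is to prove \Cref{thm:semi bandit} via the block reduction from semi-bandit to full feedback of \citet{awerbuch2008online}, which is \emph{forced} by the oracle-access assumption: since the OCRS is a black box we cannot recover the selection probabilities $q_t(e)$, so the importance-weighted estimator of the white-box case is unavailable, and instead each element's value must be estimated by directly \emph{probing} it. Concretely, I partition $[T]$ into $Z$ consecutive blocks $I_1,\dots,I_Z$ of equal length $T/Z$, and treat each block $\tau$ as a single round of a full-feedback meta-game over $\cP^{\bd}_\cF$ whose (linear) reward is the block average $f_\tau(\bx)\defeq \frac{1}{|I_\tau|}\sum_{t\in I_\tau} f(\bx,\bw_t)=\langle\bx,\bar\bw_\tau\rangle$, with $\bar\bw_\tau\defeq\frac{1}{|I_\tau|}\sum_{t\in I_\tau}\bw_t$. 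Within block $\tau$ I reserve $m$ stages $t_1,\dots,t_m$ (one per element, indexed by a random permutation $p$) for exploration and use the remaining $|I_\tau|-m$ for exploitation: in exploitation stages I feed the recommendation $\bx_\tau$ of $\regmin$ to the OCRS, and in exploration stage $t_j$ I play a feasible set containing $p(j)$ so that semi-bandit feedback reveals $w_{t_j,p(j)}$. Setting $\tilde f_\tau(e)\defeq f(\bone_e,\bw_{t_j})=w_{t_j,e}$ for $p(j)=e$ gives, since $t_j$ is uniform in $I_\tau$, an \emph{unbiased} estimate with $\E[\tilde f_\tau(e)]=f_\tau(e)$ that is bounded in $[0,1]$ (crucially, with no importance-weighting blow-up), and I update $\regmin$ with the vector $\tilde f_\tau(\cdot)$.

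Next I would decompose the $\alpha$-regret. The benchmark telescopes exactly: since $|I_\tau|=T/Z$, for any fixed $S\in\cI^{\bd}$ we have $\sum_{t=1}^T f(\bone_S,\bw_t)=\tfrac{T}{Z}\sum_{\tau=1}^Z f_\tau(\bone_S)$, so it suffices to control the meta-level $\alpha$-regret over the $Z$ rounds and rescale by $T/Z$. On the algorithm side I split each block's reward into exploitation and exploration. For an exploitation stage in block $\tau$, the $\alpha$-competitiveness of the temporal OCRS (\Cref{thm:red}) and $\bw_t\ge\mathbf 0$ give $\E[f(\ba_t,\bw_t)]\ge\alpha\,\langle\bx_\tau,\bw_t\rangle$; summing over the $|I_\tau|-m$ exploitation stages and using $\sum_{t\in I_\tau}\langle\bx_\tau,\bw_t\rangle=|I_\tau|\,f_\tau(\bx_\tau)$ lower-bounds the expected block reward by $\alpha\big(|I_\tau|\,f_\tau(\bx_\tau)-O(m)\big)$, the $O(m)$ (depending only on $m$) absorbing the displaced exploration stages. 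Since exploration reward is nonnegative, the total expected reward is at least $\alpha\tfrac{T}{Z}\E\big[\sum_\tau f_\tau(\bx_\tau)\big]-O(mZ)$.

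It then remains to bound the meta-level regret of $\regmin$ and pass to expectation. Applied to the realized estimates, the inner regret minimizer of \Cref{thm:reg full info} (FTRL) guarantees $\max_{\bx^*\in\cP^{\bd}_\cF}\sum_\tau\langle\bx^*,\tilde f_\tau(\cdot)\rangle-\sum_\tau\langle\bx_\tau,\tilde f_\tau(\cdot)\rangle\le\tilde O(m\sqrt Z)$. Taking expectation over the probing randomness, conditional unbiasedness turns $\E[\langle\bx_\tau,\tilde f_\tau(\cdot)\rangle]$ into $\E[f_\tau(\bx_\tau)]$, while Jensen gives $\E\big[\max_{\bx^*}\sum_\tau\langle\bx^*,\tilde f_\tau(\cdot)\rangle\big]\ge\max_{\bx^*}\sum_\tau f_\tau(\bx^*)\ge\max_{S}\sum_\tau f_\tau(\bone_S)$, using $\bone_S\in\cP^{\bd}_\cF$. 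Chaining these with the OCRS step and rescaling by $T/Z$ yields $\alpha\max_{S}\sum_t f(\bone_S,\bw_t)-\E\big[\sum_t f(\ba_t,\bw_t)\big]\le \tilde O\big(m\,T/\sqrt Z+mZ\big)$. The two terms balance at $Z=T^{2/3}$, giving $\tilde O(mT^{2/3})=\tilde O(T^{2/3})$, as claimed.

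The step I expect to require the most care is the passage to expectation in the presence of the adaptive recommendations $\bx_\tau$: one must verify conditional unbiasedness of $\tilde f_\tau(\cdot)$ given the algorithm's state at the start of block $\tau$, which holds because, for an oblivious adversary, the only fresh randomness inside the block is the uniform choice of the probing stage $t_j$, so $\bx_\tau$ (measurable with respect to earlier blocks' estimates) is independent of it; one must simultaneously keep the benchmark's $\E[\max]\ge\max\E$ inequality pointing in the regret-favorable direction. The second delicate point is the bookkeeping tying the block-average meta-game back to the true per-stage rewards — ensuring the $O(m)$ per-block displacement loss and the $O(mZ)$ exploration cost are genuinely lower order once $Z=T^{2/3}$ — and noting that it is the \emph{boundedness} of the probed estimators (rather than a controlled variance) that preserves the $\tilde O(m\sqrt Z)$ full-feedback rate; this is exactly the trade-off that degrades the $T^{1/2}$ rate of the white-box case to $T^{2/3}$ here.
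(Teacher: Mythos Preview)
Your proposal is correct and follows essentially the same route as the paper: the Awerbuch--Kleinberg block reduction, the unbiased probe estimator $\tilde f_\tau(e)=w_{t_j,e}$, the $\alpha$-competitiveness of the OCRS on exploitation stages, the full-feedback regret bound of $\regmin$ over $Z$ meta-rounds together with $\E[\max]\ge\max\E$, and the balancing at $Z=T^{2/3}$. The only cosmetic difference is that the paper books the per-block exploration displacement as $m^2$ (accounting a loss of up to $m$ at each of the $m$ exploration stages) rather than your $O(m)$, but this is absorbed into the $\tilde O(\cdot)$ either way.
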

\section{Conclusion and Future Work}\label{sec:concl}
In this paper we introduce fully dynamic online selection problems in which selected items affect the combinatorial constraints during their activity times. We presented a generalization of the OCRS approach that provides near optimal competitive ratios in the full-information model, and no-$\alpha$-regret algorithms with polynomial per-iteration running time with both full- and semi-bandit feedback. 
Our framework opens various future research directions. For example, it would be particularly interesting to understand whether a variation of Algorithms \ref{alg:alg full info} and \ref{alg:alg semi bandit} can be extended to the case in which the adversary changes the constraint family at each stage. 
Moreover, the study of the bandit-feedback model remains open, and no regret bound is known for that setting.

\clearpage
\section*{Acknowledgements} The authors of Sapienza are supported by the Meta Research grant on ``Fairness and Mechanism Design'', the ERC Advanced Grant 788893 AMDROMA ``Algorithmic and Mechanism Design Research in  Online Markets'',   the MIUR PRIN project ALGADIMAR ``Algorithms, Games, and Digital Markets''.

\bibliography{sample}

\newpage
\appendix
\onecolumn

\clearpage
\appendix

\section{Contention Resolution Schemes and Online Contention Resolution Schemes}\label{app:ocrs}

As explained at length in \Cref{sec:prel}, our goal in general is that of finding the independent set of maximum weight for a given feasibility constraint family. However, doing this directly might be intractable in general and we need to aim for a good approximation of the optimum. 
In particular, given a non-negative submodular function $f: [0,1]^m \rightarrow \mathbb{R}_{\ge0}$, and a family of packing constraints $\cF$, we start from an \emph{ex ante} feasible solution to the linear program $\max_{\bx\in\cP_\cF} f(\bx)$, which upper bounds the optimal value achievable.
An \emph{ex ante} feasible solution is simply a distribution over the independent sets of $\cF$, given by a vector $\bx$ in the packing constraint polytope of $\cF$. 
A key observation is that we can interpret the \emph{ex ante} optimal solution to the above linear program as a vector $\bx^*$ of fractional values, which induces distribution over elements such that $x^*_e$ is the marginal probability that element $e \in \cE$ is included in the optimum. Then, we use this solution to obtain a feasible solution that suitably approximates the optimum. 
The random set $R(\bx^*)$ constructed by \emph{ex ante} selecting each element independently with probability $x^*_e$ can be infeasible. \emph{Contention Resolution Schemes} \cite{CVZ11} are procedures that, starting from the random set of \emph{sampled} elements $R(\bx^*)$, construct a \emph{feasible} solution with good approximation guarantees with respect to the optimal solution of the original integer linear program.

\begin{definition}[Contention Resolution Schemes (CRSs) \cite{CVZ11}]
For $b,c \in [0,1]$, a $(b,c)$-balanced Contention Resolution Scheme (CRS) $\pi$ for $\cF=(\cE,\cI)$ is a procedure such that, for every ex-ante feasible solution $\bx \in b\cP_\cF$ (i.e., the down-scaled version of polytope $\cP_\cF$), and every subset $S \subseteq \cE$, returns a random set $\pi(\bx, S) \subseteq S$ satisfying the following properties:
\begin{enumerate}
    \item Feasibility: $\pi(\bx, S) \in \cI$.
    \item $c$-balancedness: $\Pr_{\pi, R(\bx)}\left[e \in \pi(\bx, R(\bx)) \mid e \in R(\bx)\right] \geq c, \forall e \in \cE$.
\end{enumerate}
\end{definition}

When elements arrive in an online fashion, \citet{FSZ16} extend  CRS to the notion of OCRS, where $R(\bx)$ is obtained in the same manner, but elements are revealed one by one in adversarial order. The procedure has to decide irrevocably whether or not to add the current element to the final solution set, which needs to be feasible and a competitive against the offline optimum. The idea is that adding a \emph{sampled} element $e \in \cE$ to the set of already selected elements $S \subseteq R(\bx)$ maintains feasibility with at least constant probability, regardless of the element and the set. This originates \Cref{def:OCRS} and the subsequent discussion.

\section{Examples}\label{app:A}
In this section, we provide some clarifying examples for the concepts introduced in Section~\ref{sec:prel} and~\ref{sec:temporal}.

\subsection{Polytopes}\label{app:A1}

Example~\ref{ex: polytopes} provides the definition of the constraint polytopes of some standard problems, while Example~\ref{ex: polytopes-off} describes their \emph{temporal} version. For a set $S\subseteq \cE$ and $\bx\in\mathbb{R}^m$, we define, with a slight abuse of notation, $\bx(S) := \sum_{e \in S}x_e$.

\begin{example}[Standard Polytopes]\label{ex: polytopes} Given a ground set $\cE$,
\begin{itemize}
    \item Let $\cK = (\cE, \cI)$ be a knapsack constraint. Then, given budget $B > 0$ and a vector of elements' sizes $\bc\in\mathbb{R}^m_{\ge 0}$, its feasibility polytope is defined as
    \begin{align*}
        \cP_\cK = \left\{\bx \in [0, 1]^{m} : \langle \bc, \bx\rangle \leq B\right\}.
    \end{align*}
    \item Let $\cG = (\cE, \cI)$ be a matching constraint. Then, its feasibility polytope is defined as
    \begin{align*}
        \cP_\cG = \left\{\bx \in [0, 1]^{m} :\bx(\delta(u)) \leq 1 , \forall u \in V\right\},
    \end{align*}
    where $\delta(u)$ denotes the set of all adjacent edges to $u \in \cV$. Note that the ground set in this case is the set of all edges of graph $G = (\cV, \cE)$. 
    \item Let $\cM = (\cE, \cI)$ be a matroid constraint. Then, its feasibility polytope is defined as
    \begin{align*}
        \cP_\cM = \left\{\bx \in [0, 1]^{m} :\bx(S) \leq \emph{\text{rank}}(S) , \forall S \subseteq \cE\right\}.
    \end{align*}
    Here, $\emph{\text{rank}}(S) \defeq \max\left\{|I| : I \subseteq S, I \in \cI \right\}$, \emph{i.e.}, the cardinality of the maximum independent set contained in $S$.
\end{itemize}
\end{example}

We can now rewrite the above polytopes under \emph{temporal} packing constraints.
\begin{example}[\emph{Temporal} Polytopes]\label{ex: polytopes-off}
For ground set $\cE$,
\begin{itemize}
    \item Let $\cK = (\cE, \cI)$ be a knapsack constraint. Then, for $B > 0$ and cost vector $\bc\in\mathbb{R}^m_{\ge 0}$, its feasibility polytope is defined as
    \begin{align*}
        \cP^{\bd}_\cK = \left\{\bx \in [0, 1]^{m} :\langle \bc,\bx\rangle \leq B, \forall e \in \cE\right\}.
    \end{align*}
    \item Let $\cG = (\cE, \cI)$ be a matching constraint. Then, its feasibility polytope is defined as
    \begin{align*}
        \cP^{\bd}_\cG = \left\{\bx \in [0, 1]^{m} :\bx(\delta(u) \cap \cE_e) \leq 1 , \forall u \in \cV, \forall e \in \cE\right\}.
    \end{align*}
    \item Let $\cM = (\cE, \cI)$ be a matroid constraint. Then, its feasibility polytope is defined as
    \begin{align*}
        \cP^{\bd}_\cM = \left\{\bx \in [0, 1]^{m} :\bx(S \cap \cE_e) \leq \emph{\text{rank}}(S) , \forall S \subseteq \cE, \forall e \in \cE\right\}.
    \end{align*}
\end{itemize}
\end{example}

We also note that, for general packing constraints, if $d_e = \infty$ for all $e \in \cE$, then $\cE_e = \cE$, $\cP^{\infty}_\cF = \cP_\cF$, and similarly for the constraint family $\cF^\infty = \cF$.

\subsection{From Standard OCRS to Temporal OCRS for Rank-1 Matroids, Matchings, Knapsacks, and General Matroids}\label{app:A2}
In this section, we explicitly derive a $(1, 1/e)$-selectable (randomized) temporal greedy OCRS for the rank-1 matroid feasibility constraint, from a $(1, 1/e)$-selectable (randomized) greedy OCRS in the standard setting \cite{Liv21}, which is also tight. Let us denote this standard OCRS as $\pi_\cM$, where $\cM$ is a rank-1 matroid.
    
\begin{corollary}\label{cor:rank1}
For the rank-1 matroid feasibility constraint family under temporal constraints, Algorithm \ref{alg:spattemp} produces a $(1, 1/e)$-selectable (randomized) temporal greedy OCRS $\hat \pi_\cM$ from $\pi_\cM$.
\end{corollary}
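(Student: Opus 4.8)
The plan is to derive the corollary as a direct instantiation of Theorem~\ref{thm:red}, specialized to the parameters $b = 1$, $c = 1/e$ and to the rank-1 matroid constraint family $\cM$. First I would recall the single-choice structure: a rank-1 matroid has independent sets $\cI = \mleft\{S \subseteq \cE : |S| \le 1\mright\}$, so its packing polytope reduces to $\cP_\cM = \mleft\{\bx \in [0,1]^m : \sum_{e \in \cE} x_e \le 1\mright\}$. This family is trivially downward-closed, hence it is a packing constraint in the sense of Definition~\ref{def:packing}, and so it satisfies the structural hypothesis of Theorem~\ref{thm:red}.

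Next I would verify that the base object $\pi_\cM$ meets the remaining hypotheses of the theorem. By \cite{Liv21}, $\pi_\cM$ is a greedy OCRS in the sense of Definition~\ref{def:OCRS}, and it is $(1, 1/e)$-selectable; these are exactly the guarantees the theorem requires of the input scheme, so no new argument is needed and I would simply cite them. The core of the proof is then a single invocation of Theorem~\ref{thm:red}: feeding $\pi_\cM$ into Algorithm~\ref{alg:spattemp} produces a procedure $\hat\pi_\cM$ which the theorem certifies to be a greedy OCRS for the temporal family $\cM^{\bd}$, inheriting the same selectability constant $(b,c) = (1, 1/e)$. Concretely, the induced temporal feasibility family is $\cI^{\bd} = \mleft\{S : |S \cap \cE_e| \le 1, \, \forall e \in \cE\mright\}$, i.e. no two selected elements may be simultaneously active, and the reduction preserves the guarantee that each sampled element is selectable with probability at least $1/e$ against any adversarial arrival order.

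Since the general reduction of Theorem~\ref{thm:red} already discharges all the combinatorial work, there is no substantive obstacle specific to rank-1 matroids. The only point requiring (minor) care is the handling of randomization: $\pi_\cM$ is a \emph{randomized} greedy OCRS, and one must confirm that its randomized $(1,1/e)$ guarantee transfers unchanged. This is immediate because Theorem~\ref{thm:red} is explicitly agnostic to whether the base OCRS is deterministic or randomized, so the guarantee carries over verbatim and the corollary follows.
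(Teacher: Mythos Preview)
Your argument is correct: Corollary~\ref{cor:rank1} does follow immediately from Theorem~\ref{thm:red} once one accepts the $(1,1/e)$-selectability of the standard rank-1 OCRS from \cite{Liv21}, and the observation that the reduction is agnostic to randomization handles the only remaining concern.

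The paper, however, takes a deliberately different route. Rather than invoking Theorem~\ref{thm:red} as a black box, it re-derives selectability from scratch in this special case: it writes out the constraint subfamily $\cF_{\pi,\bx}=\{\{e\}:e\in H(\bx)\}$ with inclusion probabilities $(1-e^{-x_e})/x_e$, computes the selection probability of $e$ as $\frac{1-e^{-x_e}}{x_e}\prod_{s_{e'}<s_e}e^{-x_{e'}}\ge 1/e$ in the standard setting, and then repeats the same computation in the temporal setting with the product restricted to $e'\in\cE_e$, using the temporal polytope constraint $\sum_{e'\in\cE_e}y_{e'}\le 1$ in place of $\sum_{e'}x_{e'}\le 1$. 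Your approach is cleaner and is all that is logically required; the paper's explicit derivation is pedagogical, serving to illustrate concretely the two-step recipe spelled out in Remark~\ref{rem:recipe} (restrict the constraint to active elements, then restrict the product in the probability bound to active predecessors). Either proof is acceptable, but the explicit version makes visible \emph{why} the reduction preserves the constant.
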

\begin{proof}
Since it is clear from context, we drop the dependence on $\cM$ and write $\pi, \hat \pi$. We will proceed by comparing side-by-side what happens in $\pi$ and in $\hat \pi$. Let us recall from Examples \ref{ex: polytopes}, \ref{ex: polytopes-off} that the polytopes can respectively be written as
\begin{align*}
    \cP_{\cM} &= \left\{\bx \in [0, 1]^{m} :\bx(S) \leq 1 , \forall S \subseteq \cE\right\},\\
    \cP^{\bd}_{\cM} &= \left\{\by \in [0, 1]^{m} :\by(S \cap \cE_e) \leq 1 , \forall S \subseteq \cE, \forall e \in \cE\right\}.
\end{align*}

The two OCRSs perform the following steps, on the basis of Algorithm \ref{alg:spattemp}. On one hand, $\pi$ defines a subfamily of constraints $\cF_{\pi, \bx} \defeq \left\{\{e\} : e \in H(\bx)\right\}$, where $e \in \cE$ is included in random subset $H(\bx) \subseteq \cE$ with probability $\frac{1 - e^{-x_e}}{x_e}$. Then, it selects the first sampled element $e \in R(\bx)$ such that $\{e\} \in \cF_{\pi, \bx}$. On the other hand, $\pi_{\by}$ defines a subfamily of constraints $\cF^{\bd}_{\pi, \by} \defeq \left\{\{e\} : e \in H(\by)\right\}$, where $e \in \cE$ is included in random subset $H(\by) \subseteq \cE$ with probability $q_e(\by) = \frac{1 - e^{-y_e}}{y_e}$. The feasibility family $\cF^{\bd}_{\pi, \by}$ induces, as per Observation \ref{obs:inf}, a sequence of feasibility families $\cF_{\pi, \by}(e) \defeq \left\{\{e\} : e \in H(\by) \cap \cE_e\right\}$, for each $e \in \cE$. For all $e^\prime \in \cE$, the OCRS selects the first sampled element $e \in R(\by)$ such that $\{e\} \in \cF_{\pi, \by}(e)$. In other words, the temporal OCRS selects a sampled element that is active only if no other element in its active elements set has been selected earlier. It is clear that both are randomized greedy OCRSs.

We will now proceed by showing that each element $e$ is selected with probability at least $1/e$ in both $\pi, \hat\pi$. In $\pi$ element $e$ is selected if sampled, and no earlier element has been selected before (i.e. its singleton set belongs to the subfamily $\cF_{\pi, \bx}$). An element $e^\prime$ is not selected with probability $1 - x_{e^\prime} \cdot \frac{1 - e^{-x_{e^\prime}}}{x_{e^\prime}} = e^{-x_{e^\prime}}$. This means that the probability of $e$ being selected is
\begin{align*}
    \frac{1 - e^{-x_e}}{x_e} \cdot \prod_{s_{e^\prime} < s_e}{e^{-x_{e^\prime}}} = \frac{1 - e^{-x_e}}{x_e} \cdot e^{-\sum_{s_{e^\prime} < s_e}x_{e^\prime}} \geq \frac{\left(1 - e^{-x_e}\right)e^{x_e-1}}{x_e} \geq \frac{1}{e},
\end{align*}
where the first inequality is justified by $\sum_{s_{e^\prime} < s_e}x_{e^\prime} \leq 1$, and the second follows because the expression is minimized for $x_e = 0$. Similarly, in $\hat\pi$ element $e$ is selected if sampled, and no earlier element \emph{that is still active} has been selected before (i.e. its singleton set belongs to the subfamily $\cF_{\pi, \by}(e)$). We have that the probability of $e$ being selected is
\begin{align*}
    \frac{1 - e^{-y_e}}{y_e} \cdot \prod_{s_{e^\prime} < s_e: e^\prime \in \cE_e}{e^{-y_{e^\prime}}} = \frac{1 - e^{-y_e}}{y_e} \cdot e^{-\sum_{s_{e^\prime} < s_e: e^\prime \in \cE_e}y_{e^\prime}} \geq \frac{\left(1 - e^{-y_e}\right)e^{y_e-1}}{y_e} \geq \frac{1}{e}.
\end{align*}
Again, the first inequality is justified by $\sum_{s_{e^\prime} < s_e: e^\prime \in \cE_e}y_{e^\prime} \leq 1$ by the temporal feasibility constraints, and the second follows because the expression is minimized for $y_e = 0$. Selectability is thus shown.
\end{proof}

\begin{remark}\label{rem:recipe}
Adapting the OCRSs in Theorem 1.8 of \cite{FSZ16} for general matroids, matchings and knapsacks, by following Algorithm \ref{alg:spattemp} step-by-step, we get the same selectability guarantees in the temporal settings as in the standard ones: respectively, $(b, 1-b), (b, e^{-2b}), (b, (1-2b)/(2-2b))$. There are two crucial steps to map a standard OCRS into a temporal one, as exemplified by Corollary \ref{cor:rank1}:
\begin{enumerate}
    \item We first need to define the temporal constraints based on the standard ones. This is done simply by enforcing the constraint in standard setting only for the current set of active elements, i.e. transforming $\cF_{\pi, \bx}$ into $\cF_{\pi, \by}(e)$ for all elements $e \in \cE$. Such a transformation is analogous to the one used to go from Example \ref{ex: polytopes} to Example \ref{ex: polytopes-off}.
    \item When proving selectability, the probability of feasibility is only calculated on elements $e^\prime$ belonging the same independent set as $e$ (which arrives later), that are \emph{still active}. This means that the probability computation is confined to only $e^\prime \in \cE_e$ such that $s_{e^\prime} < s_e$, rather than all $e^\prime \in \cE$ such that $s_{e^\prime} < s_e$.
\end{enumerate}
\end{remark}

\section{Batched Arrival: Matching Constraints}\label{app:B}
As mentioned in \Cref{sec:intro}, \citet{EFGT20} generalize the one-by-one online selection problem to a setting where elements arrive in batches. The existence of batched greedy OCRSs implies a number of results, as for instance Prophet Inequalities under matching constraints where, rather than edges, vertices with all the edges adjacent to them arrive one at a time. This can be viewed as an incoming batch of edges, for which \citet{EFGT20} explicitly construct a $(1, 1/2)$-selectable batched greedy OCRS.

Indeed, we let the ground set $\cE$ be partitioned in $k$ disjoint subsets (batches) arriving in the order $B_1, \dots, B_k$, and where elements in each batch appear at the same time. Such batches need to belong to a feasible family of batches $\cB$: for example, all batches could be required to be singletons, or they could be required to be all edges incident to a given vertex in a graph, and so on. Similarly to the traditional OCRS, we sample a random subset $R_j(\bx) \subseteq B_j$, for all $j \in [k]$, so as to form $R(\bx) \defeq \bigcup_{j \in [k]}R_j(\bx) \subseteq \cE$, where $R_j$'s are mutually independent. The fundamental difference with greedy OCRSs is that, within a given batch, weights are allowed to be correlated.

\begin{definition}[Batched Greedy OCRSs \cite{EFGT20}]\label{def:batchedOCRS}
For $b,c \in [0,1]$, let $\cP_\cF \subseteq [0,1]^{m}$ be $\cF$'s feasibility polytope. An OCRS $\pi$ for $b\cP_\cF$ is called a batched greedy OCRS with respect to $R$ if, for every ex-ante feasible solution $\bx \in b\cP_\cF$, $\pi$ defines a packing subfamily of feasible sets $\cF_{\pi, \bx} \subseteq \cF$, and it selects a sampled element $e \in B_j$ when, together with the set of already selected elements, the resulting set is in $\cF_{\pi, \bx}$. We say that a batched greedy OCRS $\pi$ is $(b,c)$-selectable if $\Pr_{\pi, R(\bx)}\left[S_j \cup \{e\} \in \cF_{\pi, \bx} \quad \forall S_j \subseteq R_j(\bx), S_j \in \cF_{\pi, \bx}\right] \geq c$, for each $j \in [k], e \in S_j$. The output feasible set will be $S \defeq \bigcup_{j \in [m]}S_j \in \cF_{\pi, \bx}$.
\end{definition}
Naturally, \Cref{thm:red} extends to batched OCRSs.

\begin{corollary}\label{cor:batchred}
Let $\cF, \cF^{\bd}$ be respectively the standard and \emph{temporal} packing constraint families, with their corresponding polytopes $\cP_{\cF}, \cP^{\bd}_{\cF}$. Let $\bx \in b\cP_{\cF}$ and $\by \in b\cP^{\bd}_{\cF}$, and consider a $(b,c)$-selectable batched greedy OCRS $\pi$ for $\cF_{\pi, \bx}$, with batches $B_1, \dots, B_k \in \cB$. We can construct a batched greedy OCRS $\hat\pi$ that is also $(b,c)$-selectable for $\cF^{\bd}_{\pi, \by}$, with batches $B_1, \dots, B_k \in \cB$.
\end{corollary}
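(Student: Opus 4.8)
The plan is to mirror the two-part argument from the proof of \Cref{thm:red}, layering the batch structure on top of it. First I would introduce a batched analogue of \Cref{alg:spattemp}: instead of processing one element at a time, the procedure $\hat\pi$ processes the batches $B_1,\dots,B_k$ in order, and when a sampled element $e\in R_j(\by)$ is considered it checks whether $(S^{\bd}\cap\cE_e)\cup\{e\}\in\cF_{\pi,\by}$ — that is, it tests only the \emph{active} part of the running set $S^{\bd}$ — and then mimics the decision that the standard batched OCRS $\pi$ would have made. Since every element of a common batch shares the same arrival time $s_e$, the active-element set $\cE_e$ of \Cref{def:active} automatically contains all other elements of that batch together with earlier, not-yet-expired elements, so the feasibility test is well defined even though the arrivals within $B_j$ are simultaneous. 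Note that the reduction never inspects weights, so the intra-batch weight correlations permitted by \Cref{def:batchedOCRS} are irrelevant to the construction.

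Next I would establish greedyness exactly as in \Cref{thm:red}. By construction $\hat\pi$ adds a sampled element only when the active subset of the already-selected set, augmented with that element, lies in $\cF_{\pi,\by}\subseteq\cF$; hence on every arrival the invariant $S^{\bd}\cap\cE_e\in\cF_{\pi,\by}$ is maintained. Invoking the intersection characterization of \Cref{obs:inf}, this yields $S^{\bd}\in\cF^{\bd}_{\pi,\by}\defeq\bigcap_{e\in\cE}\{S:S\cap\cE_e\in\cF_{\pi,\by}\}$, and $\cF_{\pi,\by}\subseteq\cF$ gives $\cF^{\bd}_{\pi,\by}\subseteq\cF^{\bd}$, so $\hat\pi$ is indeed a batched greedy OCRS for $\cF^{\bd}$.

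For selectability I would reproduce the injection from the proof of \Cref{thm:red}, but now fix a batch $B_j$ and an element $e\in S_j$ as demanded by \Cref{def:batchedOCRS}. The key point is that for every $S_j\subseteq R_j(\by)$ with $S_j\in\cF_{\pi,\by}$ under the standard constraints, its active restriction $S_j\cap\cE_e$ obtained under the temporal constraints is in one-to-one correspondence with it, because passing to active elements only drops constraints. This identifies the event that ``$S^{\bd}\cup\{e\}\in\cF^{\bd}_{\pi,\by}$ for all feasible $S^{\bd}$'' with the event that ``$(S^{\bd}\cap\cE_e)\cup\{e\}\in\cF_{\pi,\by}$ for all feasible $S^{\bd}$'', whence the per-batch selectability hypothesis $\Pr[S_j\cup\{e\}\in\cF_{\pi,\by}\ \forall\, S_j\subseteq R_j(\by),\, S_j\in\cF_{\pi,\by}]\ge c$ transfers the bound $c$ verbatim to the temporal family.

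The main obstacle I anticipate is in the batched selectability bookkeeping rather than in any new idea: unlike the one-at-a-time case, the per-batch definition quantifies over \emph{all} feasible $S_j$ drawn from the same simultaneously-arriving batch $R_j(\by)$, so I must verify that restricting each such $S_j$ to $S_j\cap\cE_e$ does not break the correspondence when several candidate elements of $B_j$ compete for the same temporal constraint. Since all of $B_j$ is mutually active, this reduces to checking that the temporal feasibility family restricted to a single batch coincides with a sub-collection of the standard one, so the injection is preserved and no probability mass is lost; the remainder of the argument then carries through as in \Cref{thm:red}, establishing that $\hat\pi$ is $(b,c)$-selectable for $\cF^{\bd}_{\pi,\by}$.
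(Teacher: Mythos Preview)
Your proposal is correct and follows essentially the same route as the paper: the paper itself dispatches the corollary in a single sentence, saying the proof is identical to that of \Cref{thm:red} once one defines a set of active elements $\cE_j$ per batch $B_j$ and runs \Cref{alg:spattemp} on batches rather than single elements. Your write-up is a faithful (and considerably more detailed) expansion of exactly that sketch, including the observation that all elements of a common batch share the same arrival time so that the per-element active set $\cE_e$ coincides with a per-batch $\cE_j$.
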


The proof of this corollary is identical to that of \Cref{thm:red}: we can indeed define a set of active elements $\cE_j$ for each batch $B_j$, and $\hat\pi$ is essentially in Algorithm \ref{alg:spattemp} but with incoming batches rather than elements, and the necessary modifications in the sets. We will demonstrate the use of batched greedy OCRSs in the graph matching setting, where vertices come one at a time together with their contiguous edges. This allows us to solve the problem of dynamically assigning tasks to reviewers for the reviewing time, and to eventually match new tasks to the same reviewers, so as to maximize the throughput of this procedure. Details are presented in Appendix \ref{app:B}.

By Corollary \ref{cor:batchred} together with Theorem 4.1 in \citet{EFGT20}, which gives an explicit construction of a $(1,1/2)$-selectable batched greedy OCRS under matching constraints, we immediately have that $(1,1/2)$-selectable batched greedy OCRS exists even under temporal constraints. For clarity, and in the spirit of Appendix \ref{app:A2}, we work out how to derive from scratch an online algorithm that is $1/2$-competitive with respect to the offline optimal matching when the graph is bipartite and temporal constraints are imposed. We do not use of Corollary \ref{cor:batchred}, but we follow the proof of this general statement for the specific setting of bipartite graph matching. Batched OCRSs in the non-temporal case are not specific to the bipartite matching case but extend in principle to arbitrary packing constraints. Nevertheless, the only known constant competitive batched OCRS is the one for \emph{general} graph matching by \citet{EFGT20}. Finally, we note that our results closely resemble the ones of \citet{dickerson2018allocation}, with the difference that their arrival order is assumed to be stochastic, whereas ours is adversarial.

This is motivated for instance by the following real-world scenario: there are $|U| = m$ ``offline'' beds (machines) in an hospital, and $|V| = n$ ``online'' patients (jobs) that arrive. Once a patient $v \in V$ comes, the hospital has to irrevocably assign it to one of the beds, say $u \in U$, and occupy it for a stochastic time equal to $d_{uv} := d_v[u]$, for $d_v \sim \cD_{v}$, i.e., the $u$-th component of random vector $d_v$. The sequence of arrivals is adversarial, but with known ex-ante distributions $\left(\cW_{v}, \cD_{v}\right)$. Moreover, the patient's healing can be thought of as a positive reward/weight equal to $w_{uv} := w_v[u]$, for $w_v \sim \cW_{v}$, i.e., the $u$-th component of random vector $w_v$, whose distributions are known to the algorithm. The hospital's goal is that of maximizing the sum of the healing weights over time, i.e., over a discrete time period of length $|V| = n$. Across $v$'s, both $w_v$'s and $d_v$'s are independent. However, within the vector itself, components $d_{uv}$ and $d_{u^\prime v}$ could be correlated, and the same holds for $w_v$'s.

\subsection{Linear Programming Formulation}
First, we construct a suitable linear-programming formulation whose fractional solution yields an upper bound on the expected optimum offline algorithm. Then, we devise an online algorithm that achieves an $\alpha$-competitive ratio with respect to the linear programming fractional solution. We follow the \emph{temporal} LP Definition \label{def:tlp}, and let $f(\bx) \defeq \langle \bw, \bx \rangle$, for $\bx \in \cP^{\bd}_{\cG}$ being a feasible fractional solution in the matching polytope. Since the matching polytope is $\cP^{\bd}_\cG = \left\{\bx \in [0, 1]^{m} :\bx(\delta(u) \cap \cE_e) \leq 1 , \forall u \in V, \forall e \in \cE\right\}$, we can equivalently write the \emph{temporal} linear program as

\begin{equation}\label{eq:lp}\tag{1}
\mleft\{\begin{array}{lll}
		\displaystyle
		\max_{\bx \in [0,1]^{m}} &\displaystyle \sum_{u \in U}\sum_{v \in V}\overline{w}_{uv} \cdot x_{uv} &  \triangleleft \text{ Objective}\\
		\text{s.t.} &\displaystyle \sum_{u \in U} x_{uv} \leq 1, \ \forall v \in V &\triangleleft \text{ Constr. 1}\nonumber\\
		&\displaystyle\sum_{v^\prime: s_{v^\prime} < s_v}\hspace{-.3cm}
		x_{uv^\prime}\cdot\emph{\Pr}\left[d_{uv^\prime} \geq s_v - s_{v^\prime}\right] + x_{uv} \leq 1, \ \forall u \in U, v\in V &\triangleleft \text{ Constr. 2}\nonumber\\
		&x_{uv} \geq 0, \ \forall u \in U, v\in V &\triangleleft \text{ Constr. 3}\nonumber\\
	\end{array}\mright.
\end{equation}

\noindent where $\overline{w}_{uv} \defeq \E_{w_v \sim \cW_v}\left[w_{uv}\right]$, when $w_{uv}$ is a random variable; when instead, it is deterministic, we simply have $\overline{w}_{uv} = w_{uv}$. Furthermore, as we argued in \Cref{sec:prel}, we can think of $\bx_{uv}$ to be the probability that edge $uv$ is inserted in the offline (fractional) optimal matching. We now show why the above linear program yields an upper bound to the offline optimal matching.

\begin{lemma}\label{lp-opt}
Cosider solution $\bx^*$ to linear program (\ref{eq:lp}). Then, $\bx^*$ is such that $\langle \overline{\bw}, \bx^*\rangle \geq \E_{\bw, \bd}\left[\langle \bw, \bone_{\OPT}\rangle\right]$, where $\bone_{\OPT} \in \{0,1\}^{m}$ is the vector denoting which of the elements have been selected by the integral offline optimum.
\end{lemma}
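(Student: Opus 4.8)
The plan is to show that the optimal LP objective value upper-bounds the expected offline optimum by exhibiting a *feasible* point $\bx^{\OPT}$ for the linear program~(\ref{eq:lp}) whose objective value equals $\E_{\bw,\bd}[\langle\bw,\bone_{\OPT}\rangle]$. Since $\bx^*$ is optimal, we then get $\langle\overline{\bw},\bx^*\rangle\ge\langle\overline{\bw},\bx^{\OPT}\rangle=\E_{\bw,\bd}[\langle\bw,\bone_{\OPT}\rangle]$, which is exactly the claimed inequality. So the whole proof reduces to defining the right candidate point and checking it lies in the feasible polytope.

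**First I would define** the candidate fractional solution by $x^{\OPT}_{uv}\defeq\Pr_{\bw,\bd}[uv\in\OPT]$, the marginal probability that the offline optimum selects edge $uv$ over the randomness of the weight and duration draws. With this definition, linearity of expectation immediately gives $\langle\overline{\bw},\bx^{\OPT}\rangle=\sum_{u,v}\overline{w}_{uv}\,\Pr[uv\in\OPT]=\E[\langle\bw,\bone_{\OPT}\rangle]$, using independence of $w_{uv}$ from the event $\{uv\in\OPT\}$ in the stochastic case (and trivially in the deterministic case). This handles the objective-value equality cleanly.

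**The main work** is verifying the three constraints. Constraint~3 ($x^{\OPT}_{uv}\ge0$) is immediate since these are probabilities. For Constraint~1, I would argue that for each fixed online vertex $v$, the offline optimum matches $v$ to at most one bed $u$, so $\sum_u\bone[uv\in\OPT]\le1$ pointwise; taking expectations gives $\sum_u x^{\OPT}_{uv}\le1$. **The hard part** is Constraint~2, the temporal packing constraint: I must show that for each bed $u$ and each arrival $v$, the expected load on $u$ at the moment $v$ arrives, plus the probability $v$ itself occupies $u$, is at most one. The key observation is that at any fixed time $s_v$, the offline optimum can have at most one job actively occupying bed $u$, i.e. $\sum_{v':\,s_{v'}<s_v}\bone[uv'\in\OPT]\,\bone[d_{uv'}\ge s_v-s_{v'}]+\bone[uv\in\OPT]\le1$ holds pointwise for every realization. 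Taking expectations and using that, conditioned on $uv'\in\OPT$, the duration $d_{uv'}$ is independent of whether the edge was selected (or more carefully, that the offline optimum's selection of $uv'$ and the event $d_{uv'}\ge s_v-s_{v'}$ factor appropriately) yields $\sum_{v':s_{v'}<s_v}x^{\OPT}_{uv'}\,\Pr[d_{uv'}\ge s_v-s_{v'}]+x^{\OPT}_{uv}\le1$, which is exactly Constraint~2.

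**The subtle step** I expect to require the most care is this factorization of the expectation in Constraint~2: one needs the duration $d_{uv'}$ to be independent of the offline optimum's decision to place $v'$ on $u$, so that $\E[\bone[uv'\in\OPT]\,\bone[d_{uv'}\ge s_v-s_{v'}]]=\Pr[uv'\in\OPT]\cdot\Pr[d_{uv'}\ge s_v-s_{v'}]$. If the offline optimum is allowed to condition its choice on the realized duration, this independence fails and the constraint as written would not be directly implied; the cleanest resolution is to take $\OPT$ to be the optimum of a model in which durations are revealed only after selection (the \emph{probing} model described earlier), or to note that the LP upper-bounds the relevant benchmark precisely because it relaxes this coupling. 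I would make explicit which benchmark $\OPT$ refers to and justify the independence accordingly, since this is where the argument could be challenged.
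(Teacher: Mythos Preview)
Your proposal is correct and follows essentially the same route as the paper: define the marginal-probability vector $x^{\OPT}_{uv}=\Pr[uv\in\OPT]$, check it satisfies the LP constraints, and conclude by optimality of $\bx^*$. The paper's own proof is in fact terser and less careful than yours on the factorization step for Constraint~2---it simply asserts the interpretation and moves on---so your explicit flagging of the independence between the event $\{uv'\in\OPT\}$ and the duration $d_{uv'}$ is a genuine improvement in rigor rather than a deviation in strategy.
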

\begin{proof}
The proof follows from analyzing the constraints. The meaning of Constraint 1 is that upon the arrival of vertex $v$, $v$ must be matched at most once in expectation. In fact, for each job $v \in V$, at most one machine $u \in U$ can be selected by the optimum, which yields
\begin{align*}
    \sum_{u \in U}{x_{uv}} \leq 1.
\end{align*}
This justifies Constraint 1. Constraint 2, on the other hand, has the following simple interpretation: machine $u$ is unavailable when job $v$ arrives if it has been matched earlier to a job $v^\prime$ such that the activity time is longer than the difference of $v, v^\prime$ arrival times. Otherwise, $u$ can in fact be matched to $v$, and this probability is of course lower than the probability of being available. This implies that for each machine $u \in U$ and each job $v \in V$,
\begin{align*}
    \sum_{v^\prime: s_{v^\prime} < s_v}&x_{uv^\prime}\cdot\emph{\Pr}\left[d_{uv^\prime} \geq s_v - s_{v^\prime}\right] + x_{uv} \leq 1.
\end{align*}
We have shown that all constraints are less restrictive for the linear program as they would be for the offline optimum. Since the objective function is the same for both, a solution for the integral optimum is also a solution for the linear program, while the converse does not necessarily hold. The statement follows.
\end{proof}

\subsection{A simple algorithm}
Inspired by the algorithm by \citet{dickerson2018allocation} (which deals with stochastic rather than adversarial arrivals), we propose Algorithm \ref{alg:matching-tocrs}. In the remainder, let $N_{\avail}(v)$ denote the set of available vertices $u \in U$ when $v \in V$ arrives.

\RestyleAlgo{ruled}

\begin{algorithm}
\caption{Bipartite Matching Temporal OCRS}\label{alg:matching-tocrs}
\KwData{Machine set $U$, job set $V$, and distributions $\cW_v, \cD_v$}
\KwResult{Matching $M \subseteq U \times V$}
Solve LP (\ref{eq:lp}) and obtain fractional solution solution $\bx^*$\;
$M \leftarrow \emptyset$\;
\For{$v \in V$}
{
    \eIf{$N_{\emph{\avail}}(v) = \emptyset$}
    {
        Reject $v$\;
    }{
        Select $u \in N_{\avail}(v)$ with probability $\alpha \cdot \frac{x^*_{uv}}{\emph{\Pr}\left[u \in N_{\avail}(v)\right]}$\;
        $M \leftarrow M \cup \{uv\}$\;
    }
}
\end{algorithm}

\begin{lemma}\label{compratio-prob}
Algorithm \ref{alg:matching-tocrs} makes every vertex $u \in U$ available with probability at least $\alpha$. Moreover, such probability is maximized for $\alpha = 1/2$.
\end{lemma}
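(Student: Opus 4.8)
The plan is to establish, by induction on the arrival order of the jobs $v \in V$, two coupled statements: (i) for every machine $u$, the marginal availability satisfies $\Pr[u \in N_{\avail}(v)] \ge 1-\alpha$; and (ii) each edge $uv$ enters the matching with unconditional probability exactly $\alpha\, x^*_{uv}$. The bound claimed in the lemma, $\Pr[u \in N_{\avail}(v)] \ge \alpha$, then follows immediately from (i) as soon as $1-\alpha \ge \alpha$, i.e. $\alpha \le 1/2$; and $\alpha = 1/2$ is the largest admissible value, which is exactly why the guarantee is ``maximized'' there (it is also the point where the derived quantity $1-\alpha$ and the target $\alpha$ coincide).

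For the inductive step at a job $v$, I would compute the probability that $u$ is busy when $v$ arrives. Machine $u$ is busy at time $s_v$ precisely when some earlier job $v'$ (with $s_{v'} < s_v$) was matched to $u$ and its occupation is still ongoing, i.e. $d_{uv'} \ge s_v - s_{v'}$. Since at most one active match can hold $u$ at any instant these events are disjoint over $v'$ (a union bound suffices in any case), and the freshly drawn duration $d_{uv'}$ is independent of the algorithm's decision to match $v'$ to $u$. Invoking the inductive hypothesis (ii) for the earlier jobs, $\Pr[u v' \in M] = \alpha\, x^*_{uv'}$, gives
\[
\Pr[u \notin N_{\avail}(v)] \le \sum_{v': s_{v'} < s_v} \alpha\, x^*_{uv'}\, \Pr[d_{uv'} \ge s_v - s_{v'}] \le \alpha\,(1 - x^*_{uv}) \le \alpha,
\]
where the middle inequality is exactly Constraint 2 of LP~(\ref{eq:lp}). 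Hence $\Pr[u \in N_{\avail}(v)] \ge 1-\alpha$, which is (i) for $v$.

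It then remains to close the induction by checking that the selection rule at $v$ is well-posed and reproduces (ii). Because $\Pr[u \in N_{\avail}(v)] \ge 1-\alpha \ge \alpha$ for $\alpha \le 1/2$, each selection probability satisfies $\alpha\, x^*_{uv}/\Pr[u\in N_{\avail}(v)] \le x^*_{uv} \le 1$, and summing over available machines gives at most $\sum_{u} x^*_{uv} \le 1$ by Constraint 1, so the per-job distribution is valid. Conditioning on availability, the unconditional match probability of $uv$ is $\Pr[u\in N_{\avail}(v)] \cdot \alpha\, x^*_{uv}/\Pr[u\in N_{\avail}(v)] = \alpha\, x^*_{uv}$, establishing (ii) for $v$. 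The base case (the first job, available with probability $1$) is immediate.

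The main obstacle is the circular dependence between the two claims: the availability bound at $v$ requires the exact match probabilities of earlier jobs, while those in turn require the selection rule (and thus the availability bounds) to have been valid earlier. The induction on arrival order is precisely what breaks this circularity. The only delicate points are justifying the probabilistic factorization---the independence of the duration $d_{uv'}$ from the event that $v'$ was matched to $u$, together with the disjointness (or a union bound) over prior occupations---so that Constraint 2 can be applied termwise. Optimizing the competitive factor $\alpha$ subject to the self-consistency requirement $1-\alpha \ge \alpha$ then pins down $\alpha = 1/2$.
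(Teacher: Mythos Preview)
Your proposal is correct and follows essentially the same inductive argument as the paper: both proofs induct on the arrival order, use Constraint~2 of the LP to bound the busy probability by $\alpha(1-x^*_{uv})$, carry the exact edge-selection probability $\alpha\,x^*_{uv}$ through the induction, and use Constraint~1 to verify the per-job selection distribution is valid. Your presentation is arguably cleaner---you maintain the stronger invariant $\Pr[u\in N_{\avail}(v)]\ge 1-\alpha$ and deduce $\ge\alpha$ from $1-\alpha\ge\alpha$, whereas the paper carries only $\ge\alpha$ and reaches the same conclusion via the algebraic observation that $r+z\le 1$ and $\alpha\le 1/2$ imply $1-\alpha r\ge \alpha+\alpha z$---but the substance is identical.
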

\begin{proof}
We will prove the claim by induction. For the first incoming job $v = 1$, $\emph{\Pr}\left[u \in N_{\avail}(v)\right] = 1 \geq \alpha$ for all machines $u \in U$, no matter what the values of $w_{uv}, d_{uv}$ are. To complete the base case, we only need to check that the probability of selecting one machine is in fact no larger than one: for this purpose, let us name the event $u$ is selected by Algorithm \ref{alg:matching-tocrs} when $v$ comes as $u \in \ALG(v)$.
\begin{align*}
    \emph{\Pr}\left[\exists u \in N_{\avail}(v): \ u \in \ALG(v)\right] = \sum_{u \in U}{\alpha \cdot \frac{x^*_{uv}}{\emph{\Pr}\left[u \in N_{\avail}(v)\right]}} \leq \alpha,
\end{align*}
where the first equality follows from the fact the events within the existence quantifier are disjoint, and recalling that $N_{\avail}(v) = U$ for the first job. Consider all vertices $v^\prime$ arriving before vertex $v$ ($s_{v^\prime} < s_v$), and assume that $\emph{\Pr}\left[u \in N_{\avail}(v^\prime)\right] \geq \alpha$ always. This means that the algorithm is makes each $u$ available with probability at least $\alpha$ for all vertex arrivals before $v$. This, in turn, implies that each $u$ is selected with probability $\alpha \cdot x^*_{uv^\prime}$. Let us observe that a machine $u \in U$ will not be available for the incoming job $v \in V$ only if the algorithm has matched it to an earlier job $v^\prime$ with activity time larger than $s_v - s_{v^\prime}$. Formally, the probability that $u$ is available for $v$ is
\begin{align*}
    \emph{\Pr}\left[u \in N_{\avail}(v)\right] &= 1 - \emph{\Pr}\left[u \notin N_{\avail}(v)\right] \\
    &= 1 - \emph{\Pr}\left[\exists v^\prime \in V: s_{v^\prime} < s_v, \ u \in \ALG(v^\prime), d_{uv^\prime} > s_v - s_{v^\prime}\right] \\
    &\geq 1 - \alpha \cdot \sum_{v^\prime: s_{v^\prime} < s_v}{x^*_{uv^\prime}\emph{\Pr}\left[d_{uv^\prime} \geq s_v - s_{v^\prime}\right]} \\
    &\geq \alpha + \alpha \cdot x^*_{uv}\\
    &\geq \alpha 
\end{align*}
The second to last inequality follows from Constraint 2, and by observing the following simple implication for all $r, z \in \mathbb{R}$: if $r + z \leq 1$, then $1 - \alpha r \geq \alpha + \alpha z$, so long as $\alpha \leq \frac{1}{2}$. Since we would like to choose $\alpha$ as large as possible, we choose $\alpha = \frac{1}{2}$. What is left to be shown is that the probability of selecting one machine is at most one:
\begin{align*}
    \emph{\Pr}\left[\exists u \in N_{\avail}(v): \ u \in \ALG(v)\right] = \sum_{u \in U}{\alpha \cdot \frac{x^*_{uv}}{\emph{\Pr}\left[u \in N_{\avail}(v)\right]}} \leq 1.
\end{align*}
The statement, thus, follows.
\end{proof}

A direct consequence of the above two lemmata is the following theorem. Indeed, if every $u$ is available with at least probability $1/2$, then the algorithm will select it, regardless of what the previous algorithm actions. In turn, the optimum will be approximated with the same factor.

\begin{theorem}
Algorithm \ref{alg:matching-tocrs} is $\frac{1}{2}$-competitive with respect to the expected optimum $\E_{\bw, \bd}\left[\langle \bw, \bone_{\OPT}\rangle\right]$.
\end{theorem}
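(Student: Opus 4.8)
The plan is to combine the two lemmata that immediately precede the theorem. The final statement claims that Algorithm~\ref{alg:matching-tocrs} is $\frac{1}{2}$-competitive against $\E_{\bw,\bd}\left[\langle \bw,\bone_{\OPT}\rangle\right]$. By Lemma~\ref{lp-opt} the fractional LP optimum $\langle \overline{\bw},\bx^*\rangle$ already upper bounds the expected offline integral optimum, so it suffices to prove that the expected reward collected by the algorithm is at least $\frac{1}{2}\langle\overline{\bw},\bx^*\rangle$. This reduces the competitiveness statement to a per-edge probability computation.

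First I would fix a job $v\in V$ and a machine $u\in U$ and compute the probability that the algorithm actually matches edge $uv$. The edge $uv$ is placed in $M$ precisely when two events co-occur: $u$ is available when $v$ arrives (i.e.\ $u\in N_{\avail}(v)$), and, conditioned on this, the algorithm's random choice selects $u$, which happens with probability $\alpha\cdot x^*_{uv}/\Pr[u\in N_{\avail}(v)]$. Multiplying these, the availability probability cancels and the unconditional probability that $uv\in M$ is exactly $\alpha\, x^*_{uv}=\frac{1}{2}x^*_{uv}$, using $\alpha=\frac12$ from Lemma~\ref{compratio-prob}. The key point is that Lemma~\ref{compratio-prob} guarantees $\Pr[u\in N_{\avail}(v)]\ge\alpha>0$, so the conditional selection probability is well defined (never exceeds $1$) and the cancellation is legitimate.

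Next I would assemble the expected total reward by linearity of expectation. Writing the algorithm's reward as $\sum_{u,v} w_{uv}\mathbf{1}[uv\in M]$ and taking expectations over the weight realizations and the algorithm's internal randomness, each term contributes $\E[w_{uv}]\cdot\Pr[uv\in M]=\overline{w}_{uv}\cdot\frac12 x^*_{uv}$; here I would use that the weight $w_{uv}$ is independent of the availability/selection events, since those depend only on arrival order and previously realized activity times, not on $w_v$. Summing over all edges gives expected reward $\frac12\sum_{u\in U}\sum_{v\in V}\overline{w}_{uv}x^*_{uv}=\frac12\langle\overline{\bw},\bx^*\rangle$, and chaining with Lemma~\ref{lp-opt} yields $\E[\text{ALG}]\ge\frac12\E_{\bw,\bd}[\langle\bw,\bone_{\OPT}\rangle]$, which is the claim.

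The main obstacle, and the only genuinely subtle point, is justifying the independence used in the expectation step and making sure the availability events are handled correctly under the adversarial arrival order with stochastic activity times. The cancellation $\Pr[uv\in M]=\alpha x^*_{uv}$ hides the fact that $\Pr[u\in N_{\avail}(v)]$ is itself a complicated quantity built inductively across all earlier arrivals; the cleanness of the argument rests entirely on Lemma~\ref{compratio-prob} having already established both the lower bound $\ge\alpha$ (needed for the matching probability to attain $\alpha x^*_{uv}$ rather than merely a lower bound) and the upper bound ensuring the per-step selection probabilities sum to at most one. I would therefore take care to invoke Lemma~\ref{compratio-prob} for the exact availability probability rather than just its lower bound, so that the product telescopes to an equality; this is precisely the content signalled by the sentence preceding the theorem, namely that if every $u$ is available with probability at least $1/2$ then the optimum is approximated within that same factor.
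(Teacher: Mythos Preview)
Your proposal is correct and follows exactly the approach the paper intends: the paper states the theorem as a ``direct consequence of the above two lemmata'' without further detail, and you have spelled out precisely that consequence by computing $\Pr[uv\in M]=\alpha x^*_{uv}$ via the built-in cancellation and then chaining with Lemma~\ref{lp-opt}. One small clarification regarding your final paragraph: you do not need to extract any ``exact availability probability'' from Lemma~\ref{compratio-prob}. The identity $\Pr[uv\in M]=\alpha x^*_{uv}$ holds by the algorithm's definition regardless of the actual value of $\Pr[u\in N_{\avail}(v)]$; the lemma's lower bound $\Pr[u\in N_{\avail}(v)]\ge\alpha$ is needed only to ensure that the conditional selection probability $\alpha x^*_{uv}/\Pr[u\in N_{\avail}(v)]$ is at most $1$ (and that the per-$v$ selection probabilities form a valid sub-distribution), so nothing beyond the lower bound is required.
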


Various applications such as prophet and probing inequalities for the batched temporal setting can be derived from the above theorem. Solving them with a constant competitive ratio yields a solution for the review problem illustrated in the introduction, where multiple financial transactions arriving over time could be assigned to one of many potential reviewers, and these reviewers can be ``reused'' once they have completed their review time.

\section{Benchmarks}\label{app:C}

\subsection{The need for stages}
We argue that, for the results in Section \ref{sec:bandit}, stages are necessary in order for us to be able to compare our algorithm against any meaningful benchmark. Suppose, in contrast, that we chose to compare against the optimum (or an approximation of it) within a single stage where $n$ jobs arrive to a single arm. 
A \emph{non-adaptive} adversary could simply run the following procedure, with each job having weight $1$: with probability $1/2$, jobs with odd arrival order have activity time $1$, and jobs with even arrival order have activity time $\infty$, with probability $1/2$ the opposite holds. To be precise, let us recall that $\infty$ is just a shorthand notation to mean that all future jobs would be blocked: indeed, the activity time of a job arriving at time $s_e$ is not unbounded but can be at most $n-s_e$. As activity times are revealed after the algorithm has made a decision for the current job, the algorithm does not know whether taking the current job will prevent it from being blocked for the entire future. The best thing the algorithm can do is to pick the first job with probability $1/2$. Indeed, if the algorithm is lucky and the activity time is $1$ then it knows to be in the first scenario and gets $n$. Otherwise, it only gets $1$. Hence, the regret would be $\cumreg[n] = n - \frac{n+1}{2} \in \Omega(n)$, which is linear. Note that $n$ and $T$ here represent two different concepts: the first is the number of elements sent \emph{within} a stage; the second is the number of stages. In the case outlined above, $T = 1$, since it is a single stage scenario. Thus, there is no hope that in a single stage we could do anything meaningful, and we turn to the framework where an entire instance of the problem is sent at each stage $t \in [T]$. 

\subsection{Choosing the right benchmark}
Now, we motivate why the Best-in-Hindsight policy introduced at the beginning of Section \ref{sec:bandit} is a strong and realistic benchmark, for an algorithm that knows the feasibility polytopes a priori. In fact, when we want to measure regret, we need to find a benchmark to compare against, which is neither too trivial nor unrealistically powerful compared to the information we have at hand. 
Below, we provide explicit lower bounds which show that the dynamic optimum is a too powerful benchmark even when the polytope is known. In particular, the next examples prove that it is impossible to achieve sublinear ($\alpha$-)Regret against the dynamic optimum. In the remainder, we always assume \emph{full feedback} and that the adversary is \emph{non-adaptive}.

In the remainder, we denote by $\ba_t^{\OPT}$ and $\ba_t^{\ALG}$ the action chosen at time $t$ by the optimum and the algorithm respectively.

\begin{lemma}\label{lem:regret}
Every algorithm has $\cumreg = \sum_{t \in [T]}\E[f_t(\ba_t^{\OPT})] - \sum_{t \in [T]}\E[f_t(\ba_t^{\ALG})] \in \Omega(T)$ against the dynamic optimum.
\end{lemma}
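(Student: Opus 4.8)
The plan is to exhibit a single, explicit instance — a non-adaptive adversary together with a feasibility structure — on which every online algorithm incurs expected per-stage loss bounded away from zero relative to the dynamic optimum, and then sum this gap over the $T$ stages. The key conceptual point is that the dynamic optimum is allowed to tailor its selection $\ba_t^{\OPT}$ to the realized weights and activity times at stage $t$, whereas the algorithm must commit to $\ba_t^{\ALG}$ before the relevant randomness is revealed. Even under full feedback, feedback only arrives \emph{after} the stage is over, so it cannot help within the stage; and because the adversary is non-adaptive (its distribution is fixed in advance), there is no learning across stages that closes the gap either. Thus it suffices to find one stage-instance where the intra-stage information deficit forces a constant loss, and to observe that this instance is simply replayed i.i.d.\ at every $t\in[T]$.

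First I would construct the single-stage instance, reusing the mechanism already described in the ``need for stages'' discussion. Take a single arm (a rank-$1$ matroid / single-choice constraint) with $n$ arriving elements, each of weight $1$. With probability $1/2$ the odd-indexed arrivals have activity time $1$ (blocking nothing) and the even-indexed arrivals have activity time $\infty$ (blocking all future selections); with probability $1/2$ the roles are swapped. The dynamic optimum, knowing the realization, always selects on a slot whose activity time is $1$ except for the very last such opportunity, harvesting weight $\Theta(n)$ within the stage, so $\E[f_t(\ba_t^{\OPT})]=\Theta(n)$. The algorithm, by contrast, faces the first element not knowing which scenario it is in: committing to select it yields the good scenario only with probability $1/2$, and in the bad scenario it is blocked and collects only $O(1)$. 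A short case analysis shows $\E[f_t(\ba_t^{\ALG})]\le \tfrac{n+1}{2}+O(1)$, giving a per-stage gap of $\Omega(n)$.

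Next I would assemble the stages. Since the adversary is non-adaptive, I let each stage $t\in[T]$ be an independent copy of the instance above (the within-stage horizon $n$ fixed, say $n=2$, which already yields a constant gap). By linearity of expectation,
\[
\cumreg=\sum_{t\in[T]}\E[f_t(\ba_t^{\OPT})]-\sum_{t\in[T]}\E[f_t(\ba_t^{\ALG})]\ge \sum_{t\in[T]}\Omega(1)=\Omega(T).
\]
Because the instance is identical and stochastically independent across stages, and the adversary's distribution is committed in advance, no algorithm can use past observations to infer the current stage's realization before acting; the $1/2$ barrier recurs every stage regardless of history.

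The main obstacle, and the step to get right, is the per-stage lower bound $\E[f_t(\ba_t^{\ALG})]\le \E[f_t(\ba_t^{\OPT})]-\Omega(1)$ against \emph{every} (possibly randomized, possibly history-dependent) algorithm. The subtlety is ruling out clever adaptive strategies within the stage: once the first element's activity time is revealed after a (non-)selection, the algorithm learns the scenario, so I must argue that the irrevocable commitment on the \emph{first} element already costs a constant in expectation, and that the symmetry between the two equiprobable scenarios prevents any first-move rule from beating $1/2$. I would formalize this by conditioning on the algorithm's (possibly randomized) decision at the first element and taking expectation over the scenario, showing the two scenarios impose opposite optimal first moves so that any fixed first-move distribution loses $\Omega(1)$ in expectation; the remaining rounds can then only recover $O(1)$ more. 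This conditioning-and-symmetry argument is where care is needed, but it is otherwise a routine minimax computation.
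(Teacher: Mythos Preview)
Your high-level strategy---exhibit a single-stage instance with a constant expected gap against the dynamic optimum, then replay it i.i.d.\ across the $T$ stages and sum by linearity of expectation---is exactly what the paper does. The paper's per-stage gadget differs from yours, though: rather than the odd/even activity-time construction you lift from the ``need for stages'' discussion, it uses three jobs in which the two equiprobable scenarios are distinguished by the \emph{weight} of the first job ($\epsilon$ versus $1$), with activity times all $1$ in one scenario and all $\infty$ in the other. This gives $\E[f_t(\ba_t^{\OPT})]\ge 3/2$ while a direct computation over the first-move probability $p$ shows $\E[f_t(\ba_t^{\ALG})]=\tfrac{2+\epsilon p}{2}\le 1+\epsilon$, hence a per-stage gap bounded away from zero.

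There is a concrete bug in your instantiation: $n=2$ does \emph{not} yield a constant gap. With two unit-weight jobs, in the ``bad'' scenario (job $1$ has activity $\infty$) picking job $1$ still returns value $1$, which is exactly what skipping it and taking job $2$ yields; so the rule ``always pick job $1$'' attains $\E[\textnormal{ALG}]=\E[\textnormal{OPT}]$ and the gap is zero. You need at least three elements for the first irrevocable decision to be genuinely costly---and even then the per-stage gap in your construction is $\Theta(1)$, not the $\Omega(n)$ you assert (though a constant suffices for the lemma). The paper's weight-based gadget sidesteps this precisely because picking the first job in the bad scenario returns only $\epsilon$ while blocking both weight-$1$ jobs, so the symmetry argument you outline goes through cleanly there.
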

\begin{proof}
Consider the case of a single arm and the arrival of 3 jobs at each stage (on at a time within the stage, revealed from top to bottom), with the constraint that at most 1 active job can be selected. The (\emph{non-adaptive}) adversary simply tosses $T$ fair coins independently at each stage: if the $t^{\text{th}}$ coin lands heads, then all $3$ jobs at the $t^{\text{th}}$ stage have activity times $1$ and weights $1$, otherwise all jobs have activity time $\infty$, the first job has weight $\epsilon$ and the last two have weight $1$ (recall that $\infty$ is just a shorthand notation to mean that all future jobs would be blocked). Figure \ref{fig:im1} shows a possible realization of the $T$ stages: at each stage the expected reward of the optimal policy is $\frac{3}{2}$, since the optimal value is $1$ or $2$ with equal probability. By linearity of expectation, $\sum_{t \in [T]}\E[f_t(\ba_t^{\OPT})] = T \cdot \E[f(\ba^{\OPT})] \geq \frac{3}{2}T$. 
\begin{center}
    \begin{figure}[h]
    \hspace{2.5cm}
        \scalebox{.75}{\tikzset{every picture/.style={line width=0.75pt}} %
\begin{tikzpicture}[x=0.75pt,y=0.75pt,yscale=-1,xscale=1]
    \draw   (12,138) -- (91,138) -- (91,178) -- (12,178) -- cycle ;
    \draw    (91,158) -- (117,68) ;
    \draw    (91,158) -- (121,158) ;
    \draw    (91,158) -- (114,239) ;
    \draw   (117,68) .. controls (117,61.1) and (122.6,55.5) .. (129.5,55.5) .. controls (136.4,55.5) and (142,61.1) .. (142,68) .. controls (142,74.9) and (136.4,80.5) .. (129.5,80.5) .. controls (122.6,80.5) and (117,74.9) .. (117,68) -- cycle ;
    \draw   (121,158) .. controls (121,151.1) and (126.6,145.5) .. (133.5,145.5) .. controls (140.4,145.5) and (146,151.1) .. (146,158) .. controls (146,164.9) and (140.4,170.5) .. (133.5,170.5) .. controls (126.6,170.5) and (121,164.9) .. (121,158) -- cycle ;
    \draw   (114,239) .. controls (114,232.1) and (119.6,226.5) .. (126.5,226.5) .. controls (133.4,226.5) and (139,232.1) .. (139,239) .. controls (139,245.9) and (133.4,251.5) .. (126.5,251.5) .. controls (119.6,251.5) and (114,245.9) .. (114,239) -- cycle ;
    \draw   (177,138) -- (256,138) -- (256,178) -- (177,178) -- cycle ;
    \draw    (256,158) -- (282,68) ;
    \draw    (256,158) -- (286,158) ;
    \draw    (256,158) -- (279,239) ;
    \draw   (282,68) .. controls (282,61.1) and (287.6,55.5) .. (294.5,55.5) .. controls (301.4,55.5) and (307,61.1) .. (307,68) .. controls (307,74.9) and (301.4,80.5) .. (294.5,80.5) .. controls (287.6,80.5) and (282,74.9) .. (282,68) -- cycle ;
    \draw   (286,158) .. controls (286,151.1) and (291.6,145.5) .. (298.5,145.5) .. controls (305.4,145.5) and (311,151.1) .. (311,158) .. controls (311,164.9) and (305.4,170.5) .. (298.5,170.5) .. controls (291.6,170.5) and (286,164.9) .. (286,158) -- cycle ;
    \draw   (279,239) .. controls (279,232.1) and (284.6,226.5) .. (291.5,226.5) .. controls (298.4,226.5) and (304,232.1) .. (304,239) .. controls (304,245.9) and (298.4,251.5) .. (291.5,251.5) .. controls (284.6,251.5) and (279,245.9) .. (279,239) -- cycle ;
    \draw   (508,138) -- (587,138) -- (587,178) -- (508,178) -- cycle ;
    \draw    (587,158) -- (613,68) ;
    \draw    (587,158) -- (617,158) ;
    \draw    (587,158) -- (610,239) ;
    \draw   (613,68) .. controls (613,61.1) and (618.6,55.5) .. (625.5,55.5) .. controls (632.4,55.5) and (638,61.1) .. (638,68) .. controls (638,74.9) and (632.4,80.5) .. (625.5,80.5) .. controls (618.6,80.5) and (613,74.9) .. (613,68) -- cycle ;
    \draw   (617,158) .. controls (617,151.1) and (622.6,145.5) .. (629.5,145.5) .. controls (636.4,145.5) and (642,151.1) .. (642,158) .. controls (642,164.9) and (636.4,170.5) .. (629.5,170.5) .. controls (622.6,170.5) and (617,164.9) .. (617,158) -- cycle ;
    \draw   (610,239) .. controls (610,232.1) and (615.6,226.5) .. (622.5,226.5) .. controls (629.4,226.5) and (635,232.1) .. (635,239) .. controls (635,245.9) and (629.4,251.5) .. (622.5,251.5) .. controls (615.6,251.5) and (610,245.9) .. (610,239) -- cycle ;
    \draw (84,88.4) node [anchor=north west][inner sep=0.75pt]    {$1,1$};
    \draw (101,137.4) node [anchor=north west][inner sep=0.75pt]    {$1,1$};
    \draw (80,199.4) node [anchor=north west][inner sep=0.75pt]    {$1,1$};
    \draw (248,88.4) node [anchor=north west][inner sep=0.75pt]    {$\epsilon,\infty$};
    \draw (264,137.4) node [anchor=north west][inner sep=0.75pt]    {$1,\infty$};
    \draw (241,199.4) node [anchor=north west][inner sep=0.75pt]    {$1,\infty$};
    \draw (578,88.4) node [anchor=north west][inner sep=0.75pt]    {$\epsilon,\infty$};
    \draw (592,137.4) node [anchor=north west][inner sep=0.75pt]    {$1,\infty$};
    \draw (572,199.4) node [anchor=north west][inner sep=0.75pt]    {$1,\infty$};
    \draw (384,151.4) node [anchor=north west][inner sep=0.75pt]    {$\dots \dots \dots $};
\end{tikzpicture}}
        \caption{Three jobs per stage: w.p. $1/2$, either $\{(1,1), (1,1), (1,1)\}$ or $\{(\epsilon, \infty), (1, \infty), (1, \infty)\}$.}
        \label{fig:im1}
    \end{figure}
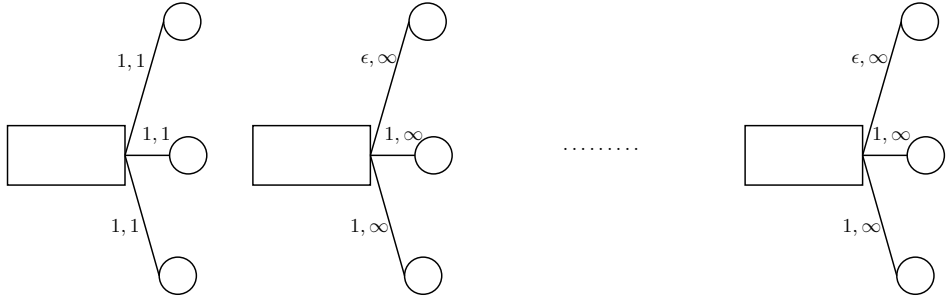
\end{center}
On the other hand, the algorithm will discover which scenario it has landed into only after the value of the first job has been revealed. If it does not pick it and it results in a weight of $1$, then the algorithm can get at most $1$ from the remaining jobs. If instead it decides to pick it but it realizes in an $\epsilon$ value, it will only get $\epsilon$. Even if the algorithm is aware of such a stochastic input beforehand, it knows that stages are independent and, hence, cannot be adaptive before a given stage begins. Then, it observes the first job weight without taking it, but it may already be too late. Any algorithm in this setting can be described by deciding to accept the first job with probability $p$ (and reject it with $1-p$), and then act adaptively. Then, again by linearity of expectation,
\begin{align*}
    \sum_{t \in [T]}\E[f_t(\ba_t^{\ALG})] &= T \cdot \E[f(\ba^{\ALG})] = T \cdot \left(\frac{1}{2}\left(2p + (1-p)\right) + \frac{1}{2}\left(\epsilon p + (1-p)\right)\right) \\
    &= \frac{2+\epsilon p}{2} \cdot T \leq (1+\epsilon) \cdot T.
\end{align*}
Thus, $\cumreg \geq (1-\epsilon) \cdot T \in \Omega(T)$.
\end{proof}

Now, we ask whether there exists a similar lower bound on approximate regret. Similarly to the previous lemma, we denote by $\ba_t^{\OCRS}$ the 
action chosen at time $t$ by the OCRS.
\begin{lemma}\label{lem:asumopt}
Every algorithm has $\cumreg = \alpha \cdot \sum_{t \in [T]}\E[f_t(\ba_t^{\OPT})] - \sum_{t \in [T]}\E[f_t(\ba_t^{\ALG})] \in \Omega(T)$ against an $\alpha$-approximation of the dynamic optimum, for $\alpha \in (0,1]$.
\end{lemma}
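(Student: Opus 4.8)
The plan is to exhibit, for every fixed $\alpha\in(0,1]$, a non-adaptive adversarial instance on which the multiplicative gap between the dynamic optimum and \emph{any} online algorithm exceeds $1/\alpha$, so that even after discounting $\OPT$ by $\alpha$ a linear deficit remains. The two-scenario construction of Lemma~\ref{lem:regret} only produces a gap of $3/2$, which handles $\alpha>2/3$; the essential point is that one genuinely needs an \emph{unbounded} gap, and I would obtain it by generalizing that construction from two scenarios to a geometric family of scenarios.

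Concretely, I would use a single arm (rank-$1$ constraint, at most one active job) and send, at each of the $T$ independent stages, $n$ jobs one at a time, all with activity time $\infty$ (so selecting any job blocks all later ones and at most one job is taken per stage). The adversary draws a single ``golden'' position $K$ with the geometric law $\Pr[K=k]=2^{-k}$ for $k\in[n-1]$, with a residual event $\Pr[K=\text{none}]=2^{-(n-1)}$ in which no job is golden; when $K=k$, job $k$ has weight $2^{k}$ and all other jobs have weight $0$ (rescaling all weights by $2^{-(n-1)}$ to land in $[0,1]$). I would then compute $\E[f_t(\ba_t^{\OPT})]=\sum_{k=1}^{n-1}2^{-k}\,2^{k}=n-1$, since the optimum knows $K$ and places its single selection on the golden job.

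The heart of the argument is the matching bound $\E[f_t(\ba_t^{\ALG})]\le 1$ for every algorithm. Here I would exploit that the only reward-generating event is selecting job $i$ exactly when $K=i$, and that this selection is made irrevocably \emph{before} the weight of job $i$ is revealed, hence only as a function of the all-zeros prefix observed so far. Writing $r_i$ for the probability that the algorithm makes its first selection at step $i$ along that all-zeros path, the stopping probabilities satisfy $\sum_i r_i\le 1$, and the geometric coupling cancels each payoff against its probability: $\E[f_t(\ba_t^{\ALG})]=\sum_i 2^{-i}\, r_i\, 2^{i}=\sum_i r_i\le 1$. Finally I would choose $n=\lceil 1+2/\alpha\rceil$, so that per stage $\alpha\,\E[f_t(\ba_t^{\OPT})]-\E[f_t(\ba_t^{\ALG})]\ge \alpha(n-1)-1\ge 1$ (up to the fixed rescaling constant), and summing over the $T$ independent stages by linearity of expectation gives $\cumreg\in\Omega(T)$.

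The step I expect to be the main obstacle is precisely the $\ALG$ upper bound, i.e.\ ruling out that intra-stage feedback helps: after skipping a job the algorithm observes its weight, so one must verify it cannot adaptively identify and grab the golden job. This is exactly what the residual ``no golden'' event and the geometric decay are engineered to prevent---seeing a positive weight means the golden job has already passed (all later weights are $0$), whereas seeing only zeros pins the conditional probability that the current job is golden at $1/2$, so deduction-by-elimination is never available and committing at any step is worth the same $O(1)$.
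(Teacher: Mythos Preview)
Your proposal is correct and takes a genuinely different construction from the paper's. The paper also reduces to the rank-$1$ single-choice constraint with infinite activity times, but its adversary draws $k$ \emph{uniformly} from $[n]$ and sends the geometric weight sequence $\delta^k,\delta^{k-1},\dots,\delta,0,\dots,0$ with $\delta=(\alpha-1/n)/2$; it then argues that the algorithm can essentially only ``guess $k$'', catching the top weight $\delta$ once per block of $n$ stages and at most $\delta^2$ otherwise, which gives a per-stage $\alpha$-gap of order $(\alpha-1/n)^2/4$.

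Your geometric-law construction buys a cleaner $\ALG$ upper bound. In the paper's instance the first revealed weight is $\delta^k$, which in principle encodes $k$; the paper's claim that ``all possible sequences have the same prefix'' therefore leans on the algorithm not exploiting intra-stage weight feedback, the very feedback invoked in the proof of Lemma~\ref{lem:regret}. Your instance sidesteps this entirely: every non-golden weight is exactly $0$, so the prefix the algorithm sees before position $K$ is literally identical across scenarios, and the coupling $\E[f_t(\ba_t^{\ALG})]=\sum_i 2^{-i}\,r_i\,2^{i}=\sum_i r_i\le 1$ is a one-line stopping-time computation that holds even with intra-stage feedback (your closing paragraph pins down exactly why seeing zeros never helps). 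The price you pay is the explicit rescaling by $2^{-(n-1)}$ to keep weights in $[0,1]$, whereas the paper's $\delta^j$ weights are bounded by construction; this is cosmetic. Both routes achieve the same structural goal, namely forcing the $\OPT/\ALG$ ratio above $1/\alpha$ by taking $n$ large, and then summing the constant per-stage deficit over $T$.
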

\begin{proof}
Let all the activity times be infinite, and define (for a given stage) the constraint to be picking a single job irrevocably. We know that, for the single-choice problem, a tight OCRS achieves $\alpha = 1/2$ competitive ratio. However, such OCRS  is not greedy. \citet{Liv21} constructs a tight greedy OCRS for single-choice, which is $\alpha = 1/e$ competitive. For our purposes, nonetheless, we only require the trivial inequality $\alpha \leq 1$. The \emph{non-adaptive} adversary could run the following a priori procedure, for each of the $T$ stages: let $\delta = \frac{\alpha - 1/n}{2}$ be a constant, sample $k \sim [n]$ uniformly at random, and send jobs in order of weights $\delta^k, \delta^{k-1}, \dots, \delta, 0, \dots, 0$ (ascending until $\delta$ and then all $0$s).\footnote{This construction is inspired by the notes of \citet{KNotes16}.} We know that, by Theorem \ref{thm:red},
\begin{align*}
    \sum_{t \in [T]}\E[f_t(\ba_t^{\OCRS})] \geq \alpha \cdot \sum_{t \in [T]}\E[f_t(\ba_t^{\OPT})].
\end{align*}
 This is possible because the greedy OCRS has access full-information about the current stage a priori (it knows $\delta$ and the sampled $k$ at each stage), unlike the algorithm, which is unaware of how the $T$ stages are going to be presented. It is easy to see that what the best the algorithm can do within a given stage is to randomly guess what the drawn $k$ has been, i.e., where $\delta$ will land. We now divide the time horizon in $T/n$ intervals, each composed of $n$ stages. In each interval, since no stage is predictive of the next, we know that the algorithm cannot be adaptive across stages, nor can it be within a stage, since  all possible sequences have the same prefix. By construction, we expect the algorithm to catch $\delta$ once per time interval, and otherwise get at most $\delta^2$, optimistically for all remaining $n-1$ stages. In other words, let us index each interval by $I \in [T/n]$ and rewrite the algorithm and the OCRS expected rewards as
\begin{align*}
    &\sum_{t \in [T]}\E[f_t(\ba_t^{\ALG})] \leq \sum_{I \in [T/n]}\left(\delta + (n-1)\delta^2\right) \leq \left(\frac{\delta}{n} + \delta^2\right) \cdot T,\\
    &\sum_{t \in [T]}\E[f_t(\ba_t^{\OCRS})] \geq \sum_{I \in [T/n]}(\alpha\delta \cdot n) = \alpha\delta \cdot T.
\end{align*}
Hence,
\begin{align*}
    \cumreg &= \sum_{t \in [T]}\E[f_t(\ba_t^{\OCRS})] - \sum_{t \in [T]}\E[f_t(\ba_t^{\ALG})]\\
    &\geq \left(\alpha\delta - \frac{\delta}{n} - \delta^2\right) \cdot T \\
    &= \frac{(\alpha - 1/n)^2}{4} \cdot T \in \Omega(T).
\end{align*}
The last step follows from the fact that $1/n \in o(\alpha)$, and $\alpha \in o(T)$.
\end{proof}

\section{Omitted proofs from Section ~\ref{sec:bandit}}

\regFullInfo*

\begin{proof}
We assume to have access to a regret minimizer for the set $\cP^{\bd}_{\cF}$ guaranteeing an upper bound on the cumulative regret up to time $T$ of $\cumreg$. Then, 

\begin{align*}
    \E\mleft[\sum_{t=1}^T f(\ba_t,\bw_t)\mright]& \ge \alpha \sum_{t=1}^T f(\bx_t,\bw_t)\\
    & \ge \alpha\mleft(\max_{\bx\in\cP^{\bd}_{\cF}}\,\sum_{t=1}^T f(\bx,\bw_t)-\cumreg\mright)\\
    & = \alpha\mleft( \max_{\ba}\sum_{t=1}^Tf(\ba,\bw_t)-\cumreg\mright),
\end{align*}
where the first inequality follows from the fact that Algorithm \ref{alg:alg full info} employs a suitable temporal OCRS $\hat\pi$ to select $\ba_t$: for each $e\in \cE$, the probability with which the OCRS selects $e$ is at least $\alpha\cdot x_{t,e}$, and since $f$ is a linear mapping (in particular, it is defined as the scalar product between a vector of weights and the choice at $t$) the above inequality holds. The second inequality is by no-regret property of the regret minimizer for decision space $\cP^{\bd}_{\cF}$. This concludes the proof.
\end{proof}

\semiBandit*
\begin{proof}
We start by computing a lower bound on the average reward the algorithm gets. Algorithm \ref{alg:alg semi bandit} splits its decisions into $Z$ blocks, and, at each $\tau\in[Z]$, chooses the action $\bx_\tau$ suggested by the \textsc{RM}, unless the stage is one of the randomly sampled exploration steps. Then, we can write
\begin{align*}
    \frac{1}{T} \cdot \E\mleft[ \sum_{t=1}^T f(\ba_t,\bw_t)\mright] &\ge \frac{\alpha}{T} \cdot  \sum_{\tau\in[Z]}\sum_{t\in I_\tau} f(\bx_t,\bw_t) \\
    &\ge \frac{\alpha}{T}\sum_{\tau\in[Z]}\sum_{t\in I_\tau} f(\bx_\tau,\bw_t) - \alpha \frac{m^2 Z}{T} \\
    &= \frac{\alpha}{T} \cdot \sum_{\tau\in[Z]} \sum_{e\in E} x_{\tau,e} \sum_{t\in I_\tau} f(\bone_e,\bw_t) - \alpha \frac{m^2 Z}{T}\\
    &= \frac{\alpha}{Z} \cdot \sum_{\tau\in[Z]} \sum_{e\in E} x_{\tau,e} \cdot \E\mleft[\tilde f_\tau(e)\mright] - \alpha \frac{m^2 Z}{T},
\end{align*}
where the first inequality is by the use of a temporal OCRS to select $\ba_t$, and the second inequality is obtained by subtracting the \emph{worst-case} costs incurred during exploration; note that the $m^2$ factor in the second inequality is due to the fact that at each of the $m$ exploration stages, we can lose at most $m$. The last equality is by definition of the unbiased estimator, since the value of $f$ is observed $T/Z$ times (once for every block) in expectation.

We can now bound from below the rightmost expression we just obtained by using the guarantees of the regret-minimizer.
\begin{align*}
\frac{\alpha}{Z} \cdot \sum_{\tau\in[Z]} \sum_{e\in \cE} x_{\tau,e} \cdot \E\mleft[\tilde f_\tau(e)\mright] - \alpha \frac{m^2 Z}{T}&\ge 
    \frac{\alpha}{Z} \cdot \E\mleft[ \max_{\bx\in \cP_{\cF}^{\bd}}\sum_{\tau\in[Z]} \sum_{e\in \cE} x_e \tilde f_\tau(e) - \cumreg[Z]\mright] - \alpha \frac{m^2 Z}{T}\\
    &\hspace{-1cm}= \frac{\alpha}{Z} \max_{\bx\in \cP_{\cF}^{\bd}}\sum_{\tau\in[Z]} \sum_{e\in \cE} x_e \cdot \E\mleft[\tilde f_\tau(e) \mright] - \frac{\alpha}{Z}\cumreg[Z] - \alpha \frac{m^2 Z}{T}\\
    &\hspace{-1cm}= \frac{\alpha}{T} \max_{\ba\in \cF^{\bd}}\sum_{\tau\in[Z]} \sum_{e\in \cE} x_e \sum_{t\in I_\tau}f(\bone_e,\bw_t) - \frac{\alpha}{Z}\cumreg[Z] - \alpha \frac{m^2 Z}{T}\\
    &\hspace{-1cm}= \frac{\alpha}{T} \max_{\ba\in \cF^{\bd}}\sum_{t=1}^T f(\ba_t,\bw_t) - \frac{\alpha}{Z}\cumreg[Z] - \alpha \frac{m^2 Z}{T},
\end{align*}
where we used unbiasedness of $\tilde f_\tau(e)$, and the fact that the value of optimal fractional vector in the polytope is the same value provided by the best superarm (i.e., best vertex of the polytope) by convexity. The third equality follows from expanding the expectation of the unbiased estimator (i.e. $\E\mleft[\tilde f_\tau(e) \mright] \defeq \frac{Z}{T} \cdot \sum_{t\in I_\tau}f(\bone_e,\bw_t)$). Let us now rearrange the last expression and compute the cumulative regret: 
\begin{align*}
     \alpha \max_{\ba\in \cF^{\bd}}\sum_{t=1}^T f(\ba_t,\bw_t) - \E\mleft[ \sum_{t=1}^T f(\ba_t,\bw_t)\mright] &\leq \frac{\alpha}{Z}\underbrace{\cumreg[Z]}_{\leq \tilde O (\sqrt{Z})} T  + \alpha m^2 Z \\
     &\leq \tilde O(T^{2/3}),
\end{align*}
where in the last step we set $Z=T^{2/3}$ and obtain the desired upper bound on regret (the term $\alpha m^2$ is incorporated in the $\tilde O$ notation). The theorem follows.
\end{proof}

\section{Further Related Works}\label{app:D}
\paragraph{CRS and OCRS.} Contention resolution schemes (CRS) were introduced by~\citet{CVZ11} as a powerful rounding technique in the context of submodular maximization. The CRS framework was extended to online contention resolution schemes (OCRS) for online selection problems by \citet{FSZ16}, who provided
OCRSs for different problems, including intersections of matroids, matchings, and prophet inequalities. \citet{EFGT20} recently extended OCRS to batched arrivals, providing a constant competitive ratio for stochastic max-weight matching in vertex and edge arrival models.

\paragraph{Combinatorial Bandits.} The problem of combinatorial bandits was first studied in the context of \emph{online shortest paths}~\cite{awerbuch2008online,gyorgy2007line}, and the general version of the problem is due to \citet{cesa2012combinatorial}. Improved regret bounds can be achieved in the case of combinatorial bandits with semi-bandit feedback~ (see, \emph{e.g.}, \cite{chen2013combinatorial,kveton2015tight,audibert2014regret}). A related problem is that of \emph{linear bandits}~\cite{awerbuch2008online,mcmahan2004online}, which admit computationally efficient algorithms in the case in which the action set is convex~\cite{abernethy2009competing}.

\paragraph{Blocking bandits.} In \emph{blocking bandits} \cite{BSSS19} the arm that is played is blocked for a specific number of stages.  Blocking bandits have recently been studied in contextual \cite{BPCS21}, combinatorial \cite{AtsidakouP0CS21}, and adversarial \cite{BishopCMT20} settings. Our bandit model differs from blocking bandits since we consider each instance of the problem confined within each stage. In addition, the online full information problems that are solved in most blocking bandits papers \cite{AtsidakouP0CS21, BPCS21, dickerson2018allocation} only addresses specific cases of the fully dynamic online selection problem, which we solve in entire generality. 

\paragraph{Sleeping bandits.} As mentioned, our problem is similar to that of \emph{sleeping bandits} (see \cite{KNMS10} and follow-up papers), but at the same time the two models differ in a number of ways. Just like the sleeping bandits case, the adversary in our setting decides which actions we can perform by setting arbitrary activity times at each $t$. The crucial difference between the two settings is that, in sleeping bandits, once an adversary has chosen the available actions for a given stage, they have to communicate them all at once to the algorithm. In our case, instead, the adversary can choose the available actions within a given stage as the elements arrive, so it is, in some sense, ``more dynamic''. In particular, in the temporal setting there are two levels of adaptivity for the adversary: on one hand, the adversary may or may not be adaptive across stages (this is the classic bandit notion of adaptivity). On the other hand, the adversary may or may not be adaptive within the same stage (which is the notion of online algorithms adaptivity).

\end{document}